\definecolor{mycolor}{RGB}{0,0,0}
\newtheorem{theorem}{Theorem}
\newtheorem{definition}{Definition}
\begin{document}

\title{\LARGE \textbf{CP-uniGuard: A Unified, Probability-Agnostic, and Adaptive Framework for Malicious Agent Detection and Defense in Multi-Agent Embodied Perception Systems}}

\author{Senkang Hu, Yihang Tao, Guowen Xu, Xinyuan Qian, Yiqin Deng, Xianhao Chen, \\Sam Tak Wu Kwong,~\IEEEmembership{Fellow,~IEEE,} Yuguang Fang,~\IEEEmembership{Fellow,~IEEE}

\IEEEcompsocitemizethanks{
\IEEEcompsocthanksitem{Senkang Hu, Yihang Tao, Yiqin Deng and Yuguang Fang are with Hong Kong JC STEM Lab of Smart City and Department of Computer
Science, City University of Hong Kong, Kowloon, Hong Kong SAR. (e-mail: senkang.forest@my.cityu.edu.hk; yihang.tommy@my.cityu.edu.hk; yiqideng@cityu.edu.hk; my.Fang@cityu.edu.hk)}
\IEEEcompsocthanksitem{Guowen Xu and Xinyuan Qian are with the School of Computer Science and Engineering, University of Electronic Science and Technology of China, Chengdu, China. (e-mail: guowen.xu@uestc.edu.cn; xinyuanqian@outlook.com)}
\IEEEcompsocthanksitem{Xianhao Chen is with the Department of Electrical and Electronic Engineering, The University of Hong Kong, Pok Fu Lam, Hong Kong SAR. (e-mail: {xchen@eee.hku.hk})}
\IEEEcompsocthanksitem{Sam Tak Wu Kwong is with the School of Data Science, Lingnan University, Tuen Mun, Hong Kong SAR. (e-mail: {samkwong@ln.edu.hk})}
}

\thanks{       
The research work described in this paper was conducted in the JC STEM Lab of Smart City funded by The Hong Kong Jockey Club Charities Trust under Contract 2023-0108.  The work was supported in part by the Hong Kong SAR Government under the Global STEM Professorship and Research Talent Hub. The work of S. Hu was  supported in part by the Hong Kong Innovation and Technology Commission under InnoHK Project CIMDA. The work of G. Xu was supported in part by the National Natural Science Foundation of China under Grant No. 62502075. The work of X. Qian was supported in part by the China Postdoctoral Science Foundation under Grant BX20250389. The work of Y. Deng was supported in part by the National Natural Science Foundation of China under Grant No. 62301300. The work of X. Chen was supported in part by the Research Grants Council of Hong Kong under Grant 27213824 and Grant CRS HKU702/24. A preliminary version of this work was presented at the 39th Annual AAAI Conference on Artificial Intelligence (AAAI'25) \cite{huCPGuardMaliciousAgent2025}. (Corresponding author: Yiqin Deng)
}
}



\IEEEtitleabstractindextext{%
\begin{abstract}
Collaborative Perception (CP) has been shown to be a promising technique for multi-agent autonomous driving and multi-agent robotic systems, where multiple agents share their perception information to enhance the overall perception performance and expand the perception range. However, in CP, an ego agent needs to receive messages from its collaborators, which makes it vulnerable to attacks from malicious agents. To address this critical issue, we propose a unified, probability-agnostic, and adaptive framework, namely, CP-uniGuard, which is a tailored defense mechanism for CP deployed by each agent to accurately detect and eliminate malicious agents in its collaboration network. Our key idea is to enable CP to reach a consensus rather than a conflict against an ego agent's perception results. Based on this idea, we first develop a probability-agnostic sample consensus (PASAC) method to effectively sample a subset of the collaborators and verify the consensus without prior probabilities of malicious agents. 
Furthermore, we define collaborative consistency loss (CCLoss) for object detection task and bird's eye view (BEV) segmentation task to capture the discrepancy between an ego agent and its collaborators, which is used as a verification criterion for consensus. In addition, we propose online adaptive threshold via dual sliding windows to dynamically adjust the threshold for consensus verification and ensure the reliability of the systems in dynamic environments. Finally, we conduct extensive experiments and  demonstrate the effectiveness of our framework. Code is available at \texttt{\url{https://github.com/CP-Security/CP-uniGuard}}.
\end{abstract}

\begin{IEEEkeywords}
Collaborative Perception, Multi-Agent Systems, Malicious Agent Detection, Embodied Perception.
\end{IEEEkeywords}
}
\maketitle

\section{Introduction}
\label{sec:intro}

\begin{figure}[t]
    \centering
    \includegraphics[width=.8\linewidth]{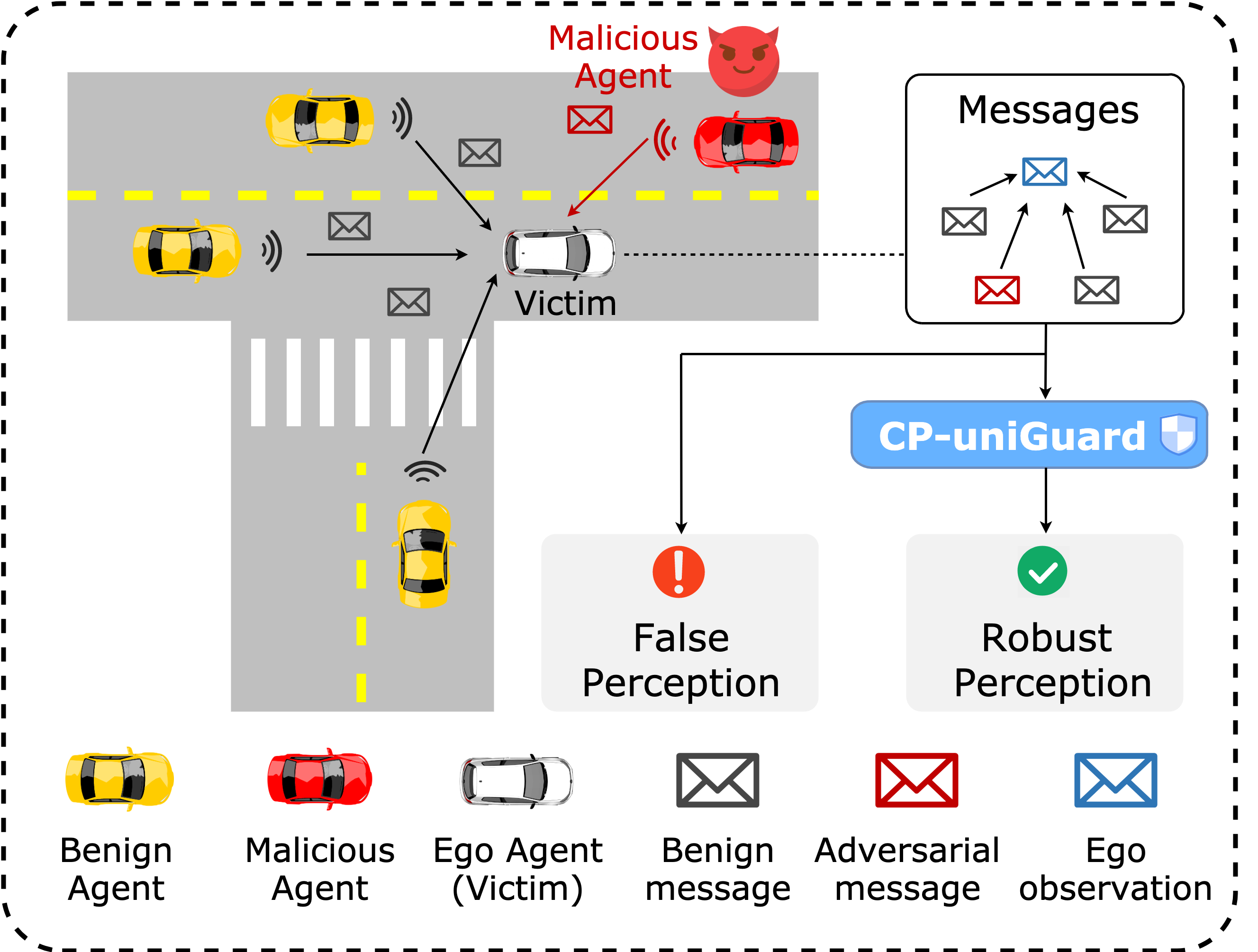}
    \caption{\textbf{Illustration of the threats of malicious agent in collaborative perception and our defense framework, CP-uniGuard}. When there is no defense, malicious agents could easily send intricately crafted adversarial messages to an ego agent, consequently misleading the CP system to yield false perception outputs. To counter this vulnerability, we propose CP-uniGuard, a tailored defense mechanism for CP to effectively detect and neutralize malicious agents, thereby ensuring robust perception outcomes.} 
    \label{fig:security_threats}
    \vspace{-6mm} 
\end{figure}

\IEEEPARstart{R}{ecently}, multi-agent collaborative perception has attracted great attention from both academia and industries since it can overcome the limitation of single-agent perception such as occlusion and limitation of sensing range \cite{hanCollaborativePerceptionAutonomous2023,huCollaborativePerceptionConnected2024}. 
Because the collaborative agents send complementary information (e.g., raw sensor data, intermediate features, and perception results) to ego agent, the ego agent can leverage this complementary information to extend its perception range and tackle the blind spot problem in its view, which is crucial for the safety of the multi-agent systems. The operational flow of CP is as follows. Each agent independently encodes local sensor inputs into intermediate feature maps. Then, these agents share their feature maps with their ego agent by agent-to-agent communication (e.g., vehicle-to-vehicle (V2V) communication in autonomous driving). Finally, the ego agent fuses the received feature maps with its own feature maps and decodes them to acquire the final perception results.
 
However, compared with single-agent perception, multi-agent CP is more vulnerable to security threats and easy to be attacked, since it incorporates the information from multiple agents, making the attack surface larger. An attack could be executed by a man-in-the-middle who alters the feature maps sent to the victim agent, or by a malicious agent that directly transmits manipulated feature maps to the victim agent. For example, Tu \textit{et al.} \cite{tuAdversarialAttacksMultiAgent2021} generated adversarial perturbations on the feature maps and attacked an ego agent, resulting in the wrong perception results. Additionally, as the encoded feature maps are not visually interpretable by humans, moderate modifications to these maps will go unnoticed, rendering the attack quite stealthy.

This issue poses significant risks to CP if an ego agent cannot accurately detect and eliminate malicious agents in its collaboration network, leading to corrupted perception results, which may result in catastrophic consequences.
For example, in autonomous driving, an ego agent may misclassify the traffic light status or fail to detect the front objects, leading to serious traffic accidents or even loss of life. Therefore, it is essential to develop a defense mechanism for CP that is robust to attack from malicious agents and can remove the malicious agents in its collaboration network.

In order to address this issue, several works have explored this problem. For example, Li \textit{et al.} \cite{liUsAdversariallyRobust2023} proposed robust collaborative sampling consensus (ROBOSAC) to randomly sample a subset of the collaborators and verify the consensus, but it requires the prior probabilities of malicious agents, which are usually unknown in practice. In addition, Zhao \textit{et al.} \cite{zhaoMaliciousAgentDetection2023} and Zhang \textit{et al.} \cite{zhangDataFabricationCollaborative2023} also developed defense methods against malicious agents, while these methods need to check the collaborators one by one, which is inefficient and computation-intensive.
Moreover, Hu \textit{et al.} \cite{hu2025cpguardnewparadigmmalicious} designed CP-Guard+, which trains an anomaly detector to identify malicious agents at the feature level. However, it requires collecting feature maps from both benign and malicious agents as training data, which is hard to achieve in real-world applications. Additionally, training an anomaly detector demands extra computational resources and lacks generalization for unseen attacks. For example, malicious agents may develop new attacks not covered by the training data to bypass the anomaly detector.

In order to overcome the aforementioned limitations, we design a novel defense mechanism, CP-uniGuard. It can be deployed by each agent to accurately detect and eliminate malicious agents in its local collaboration network. The key idea is to enable CP to achieve a consensus rather than a conflict against an ego agent's perception results.
Following this idea, we first design a \textit{probability-agnostic sample consensus} (PASAC) method to effectively sample a subset of the collaborators and verify the consensus without prior probabilities of malicious agents. In addition, the consensus is verified by our carefully designed \textit{collaborative consistency loss} (CCLoss), which is used to calculate the discrepancy between an ego agent and its collaborators. 
If a collaborator's collaborative consistency loss exceeds a certain threshold, the collaborator is considered to be a benign agent, otherwise, it is considered to be a malicious agent. In order to ensure a low false positive rate and the reliability of the system, we propose an \textit{online adaptive threshold} to dynamically adjust the threshold for consensus verification.
The main contributions of this paper are summarized as follows. 
\begin{itemize}
    \item We analyze the vulnerabilities of CP against malicious agents and develop a novel framework for robust collaborative BEV perception, CP-uniGuard, which can defend against attacks and eliminate malicious agents from a local collaboration network.
    \item We establish a probability-agnostic sample consensus (PASAC) method to effectively sample a subset of the collaborators and verify the consensus without prior probabilities of malicious agents. 
    \item In addition, we design a collaborative consistency loss (CCLoss) as a verification criterion for consensus, which can calculate the discrepancy between an ego agent and its collaborators.
    \item We also propose an online adaptive threshold scheme to dynamically adjust the threshold for consensus verification, which can ensure the reliability of the system in dynamic environments.
    \item Finally, we conduct extensive experiments on collaborative object detection and collaborative BEV segmentation tasks and demonstrate the effectiveness of our CP-uniGuard. 
\end{itemize}


\section{Background and Related Work}
\label{sec:background_related_work}

\subsection{Collaborative Perception}
\label{sec:cp}
Collaborative perception has been investigated as a means to mitigate the limitations inherent to the field-of-view (FoV) in single-agent perception systems, enhancing  accuracy, robustness, and resilience of these systems \cite{fangPACPPriorityAwareCollaborative2024, fangPrioritizedInformationBottleneck2025}. In this collaborative context, agents may opt for one of three predominant data fusion strategies: (1) early-stage raw data fusion, (2) intermediate-stage feature fusion, and (3) late-stage output fusion. Early-stage fusion, while increasing the data communication load, typically yields more precise collaboration outcomes. In contrast, late-stage fusion consumes less bandwidth but introduces greater uncertainty to the results. Intermediate-stage fusion, favored in much of the current literature, strikes an optimal balance between communication overhead and perceptual accuracy.
Research aimed at enhancing collaborative perception performance is multifaceted, addressing aspects such as communication overhead \cite{Su_Chen_Bai_Lin_Li_Qu_Zhou_2024, 9451536, 9312959}, robustness \cite{10160546,huAgentsCoDriverLargeLanguage2024,huAgentsCoMergeLargeLanguage2025,tao2024directcpdirectedcollaborativeperception,huAdaptiveCommunicationsCollaborative2024,fangPrioritizedInformationBottleneck2025}, system heterogeneity \cite{lu2024an}, and domain generalization \cite{10779389}. Among these, robustness has emerged as a particularly critical focus within the field of collaborative perception.
Despite extensive studies on system intrinsic robustness, addressing challenges such as communication disruptions \cite{10457955}, pose noise correction \cite{10160546}, and communication latency \cite{10.1007/978-3-031-19824-3_19}, most existing research works have not accounted for the presence of malicious attackers within the collaborative framework. Only a selected few studies examine the implications of robustness in scenarios compromised by malicious nodes, highlighting a significant gap in current research methodologies.

\subsection{Adversarial Perception}
Adversarial attacks targeting single-agent perception systems predominantly employ techniques such as GPS spoofing \cite{Li_2021_ICCV}, LiDAR spoofing \cite{279980}, and the deployment of physically realizable adversarial objects \cite{Tu_2020_CVPR}. In the context of multi-agent collaborative perception, the nature of adversarial strategies can vary significantly depending on the stage of collaboration.
For early-stage collaborative perception, Zhang \textit{et al.} \cite{zhangDataFabricationCollaborative2023} have developed sophisticated attacks involving object spoofing and removal. These attacks exploit vulnerabilities by simulating the presence or absence of objects and reconstructing LiDAR point clouds using advanced ray-casting techniques. In contrast, late-stage collaboration typically involves the sharing of object locations \cite{9120490}, which provides adversaries with opportunities to manipulate these shared locations easily.
Intermediate-stage attacks are particularly nuanced, often requiring that an attacker possesses white-box access to the perception models. This knowledge enables more precise manipulations of the system, though such systems are generally resistant to simplistic black-box attack strategies, such as ray-casting attacks, due to the protective effect of benign feature maps that significantly reduce the efficacy of such attacks.
Tu \textit{et al.} \cite{tuAdversarialAttacksMultiAgent2021} were among the pioneers in articulating an untargeted adversarial attack aiming at maximizing the generation of inaccurate detection bounding boxes by manipulating feature maps in intermediate-fusion systems. Building on this foundational work, Zhang \textit{et al.} \cite{zhangDataFabricationCollaborative2023} have advanced the methodology by integrating perturbation initialization and feature map masking techniques to facilitate realistic, real-time targeted attacks.
Our work is dedicated to exploring and mitigating adversarial threats specifically under intermediate-level collaborative perception framework, aiming to enhance system resilience against sophisticated attacks.

\subsection{Defensive Perception}

To fortify intermediate-level collaborative perception systems against adversarial attacks, Li \textit{et al.} \cite{liUsAdversariallyRobust2023} proposed the Robust Collaborative Sampling Consensus (ROBOSAC) method. This approach entails a random selection of a subset of collaborators for consensus verification. Despite its potential, the efficacy of ROBOSAC hinges on the availability of prior probabilities of malicious intent among agents, which are often unknown in real-world scenarios.
Moreover, Zhao \textit{et al.} \cite{zhaoMaliciousAgentDetection2023} and Zhang \textit{et al.} \cite{zhangDataFabricationCollaborative2023} have formulated defensive strategies identifying malicious agents. These techniques, however, involve scrutinizing each collaborator individually, which are both computationally intensive and inefficient.
Adversarial training has also been explored as a mechanism to bolster system robustness, as demonstrated in the studies by Tu \textit{et al.} \cite{tuAdversarialAttacksMultiAgent2021}, Raghunathan \textit{et al.} \cite{raghunathanUnderstandingMitigatingTradeoff2020}, and Zhang \textit{et al.} \cite{zhangAdversarialExamplesOpportunities2020}. While this approach enhances system robustness to address security, it substantially increases the computational load during training and may not effectively generalize to novel, unseen attacks \cite{ni2023recovering, ni2023xporter, yuan2024itpatch}. Additionally, adversarial training often results in diminished model accuracy and poses significant challenges in developing a computationally efficient and scalable adversarial defense that can be broadly applied across collaborative perception platforms.
In contrast, our approach introduces a methodology that can be autonomously implemented by each agent to accurately detect and neutralize malicious entities within the local collaborative network, aiming to enhance both the efficiency and effectiveness of defense mechanisms in collaborative perception systems.
\section{Problem Setup}
\label{sec:problem_setup}

\begin{algorithm}[t]
    \caption{Collaborative Perception by Intermediate Fusion}
    \label{alg:cp_pipeline}
    \begin{algorithmic}[1]
        \Require Inputs $\{\mathbf{x}_{i,t}\}_{i=0}^{n-1}$ from $n$ agents
        \Require Relative poses $\{\mathbf{T}_{i\to 0,t}\}_{i=1}^{n-1}$ between agents and ego agent
        \Require Encoder $E_{\theta_i}$, compressor $C_{\phi}$, aggregator $A_{\psi}$, and decoder $D_{\omega}$
        \Ensure Perception output $\hat{\mathbf{y}}_t$ (e.g., object bounding boxes or BEV segmentation map)
        \State // Stage 1: Local feature extraction for each agent
        \For{$i = 0$ to $n-1$}
            \State $\mathbf{f}_{i,t} \gets E_{\theta_i}(\mathbf{x}_{i,t})$ \Comment{Extract features}
        \EndFor
        \State // Stage 2: Coordinate transformation and message preparation
        \State $\mathtt{Mess}_t \gets \emptyset$ \Comment{Initialize message set for ego agent}
        \For{$i = 1$ to $n-1$}
            \State $\mathbf{f}_{i,t}^{w} \gets \mathcal{W}(\mathbf{f}_{i,t}, \mathbf{T}_{i\to 0,t})$ \Comment{Warp features}
            \State $\mathbf{m}_{i,t} \gets C_{\phi}(\mathbf{f}_{i,t}^{w})$ \Comment{Compress features (optional)}
            \State $\mathtt{Mess}_t \gets \mathtt{Mess}_t \cup \{\mathbf{m}_{i,t}\}$ \Comment{Add to message set}
        \EndFor
        \State // Stage 3: Feature fusion at the ego agent
        \State $\mathbf{z}_t \gets A_{\psi}(\mathbf{f}_{0,t}, \mathtt{Mess}_t)$ \Comment{Fuse the ego's and collaborators' features}
        \State // Stage 4: Generate perception output
        \State $\hat{\mathbf{y}}_t \gets D_{\omega}(\mathbf{z}_t)$ \Comment{Decode the fused features into the perception output}
        \State \Return $\hat{\mathbf{y}}_t$
    \end{algorithmic}
\end{algorithm}

\subsection{Collaborative Perception}
\label{sec:collab_perception}

As stated in Section \ref{sec:cp}, CP can be categorized into three classes according to the stage of collaboration, including early fusion, intermediate fusion, and late fusion. In this section, we focus on the widely used intermediate fusion paradigm, which is the most efficient and effective fusion strategy. 

Specifically, consider a set of $n$ agents (e.g., connected and autonomous vehicles (CAVs), unmanned aerial vehicles (UAVs), robots, etc.), with the ego agent indexed as 0. Each agent is equipped with a learnable encoder $E_{\theta_i}$, an aggregator $A_\psi$, and task-specific decoders $D_\omega$ (e.g., BEV segmentation, object detection, etc.). For the $i$-th agent, its input data is $\mathbf{x}_{i,t} \in \mathbb{R}^{W \times H \times d}$, where $W$ and $H$ are the width and height, respectively, and $d$ is the dimension of the input data. The intermediate CP pipeline can be described as follows.

\begin{enumerate}
\item \textbf{Scene state and sensor model.} Let the latent scene state at time $t$ be a random vector $\mathbf{s}_t\in\mathcal{S}$. Each agent $i\in V=\{0,1,\dots,n-1\}$ carries a sensor suite with the observation model
\begin{equation}
  \mathbf{x}_{i,t}=h_i\bigl(\mathbf{s}_t\bigr)+\boldsymbol\eta_{i,t},\qquad\boldsymbol\eta_{i,t}\sim\mathcal{N}(0,\sigma_i^2\mathbf{I}),
  \label{eq:sensor_model}
\end{equation}
where $h_i:\mathcal{S}\to\mathbb{R}^{d_i}$ encodes sensor intrinsics and mounting parameters.

\item \textbf{Local feature extraction.} Each agent passes its raw observation through a learnable encoder $E_{\theta_i}$ to obtain a local feature map
\begin{equation}
  \mathbf{f}_{i,t}=E_{\theta_i}\bigl(\mathbf{x}_{i,t}\bigr)\in\mathbb{R}^{c\times H\times W}.
  \label{eq:local_feature}
\end{equation}

\item \textbf{Coordinate transformation and message construction.} Agents' features are aligned to a common coordinate frame using the ego-centric rigid transform $\mathbf{T}_{i\to 0,t}$ from agent $i$ to the ego agent (index 0) through a 
differentiable spatial transformation 
operation $\mathcal{W}(\cdot,\mathbf{T})$. The communicated message is constructed as:
\begin{equation}
  \mathbf{m}_{i,t}=C_{\phi}\Bigl(\mathcal{W}\bigl(\mathbf{f}_{i,t},\mathbf{T}_{i\to 0,t}\bigr)\Bigr),
  \label{eq:message}
\end{equation}
where $C_{\phi}$ is an optional compression module (e.g., FP16 quantisation or learned channel pruning).

\item \textbf{Fusion graph.} Define a directed communication graph $G_t=(V,E_t)$ with $V=\{0,1,\dots,n-1\}$. Edge $(j\to i)\in E_t$ exists if agent $j$ transmits $\mathbf{m}_{j,t}$ to agent $i$ within the latency budget $\Delta\tau$. The ego sees the multiset:
\begin{equation}
  \mathtt{Mess}_t=\{\mathbf{m}_{j,t}\mid(j\to 0)\in E_t\}.
  \label{eq:fusion_graph}
\end{equation}

\item \textbf{Feature aggregation.} The ego fuses incoming features with an order-invariant aggregator $A_\psi$ (e.g., max-pool, learnable attention):
\begin{equation}
  \mathbf{z}_t=A_\psi\Bigl(\mathbf{f}_{0,t},\mathtt{Mess}_t\Bigr)\in\mathbb{R}^{c\times H\times W}.
  \label{eq:aggregator}
\end{equation}

\item \textbf{Task head and supervised objective.} A task-specific decoder $D_\omega$ produces the final prediction $\hat{\mathbf{y}}_t=D_\omega(\mathbf{z}_t)$ such as BEV segmentation maps or bounding boxes of objects. Given ground-truth labels $\mathbf{y}_t$, training minimizes the following objective function:
\begin{equation}
  \mathcal{L}(\theta,\phi,\psi,\omega)=\mathbb{E}_{t}\Bigl[\ell\bigl(\hat{\mathbf{y}}_t,\mathbf{y}_t\bigr)\Bigr]+\lambda\mathcal{R}(\theta,\phi,\psi,\omega),
  \label{eq:training_obj}
\end{equation}
where $\ell(\cdot,\cdot)$ is the task loss and $\mathcal{R}$ captures the regularization terms (e.g., L2 norm).
\end{enumerate}
The pseudo code of the collaborative perception pipeline is shown in Algorithm \ref{alg:cp_pipeline}.

\subsection{Adversarial Threat Model in Collaborative Perception}
\label{sec:threat_model_cp}

In order to defend against malicious agents in CP, we need to figure out the attack scenarios and the attacker's abilities first. Specifically, we consider an attacker to have full access to malicious agents. In addition, since the CP model is deployed on each agent, the attacker has full access to the model architecture, parameters, and intermediate feature maps, enabling the attacker to launch a white-box attack. Based on this, the attacker aims to manipulate the intermediate feature maps by adding adversarial perturbations to maximize the ego agent's perception loss. Then, these adversarial messages are transmitted to the ego agent to fool its perception fusion. The detailed attack formulation is shown in the following.

Let $V=\{0,1,\dots,n-1\}$ be the set of agents with ego index 0. We consider a subset $\mathcal{M}\subset V$ of \emph{compromised agents} with $m=|\mathcal{M}|$. For each $i\in\mathcal{M}$ we grant the adversary full knowledge of the local encoder $E_{\theta_i}$, the global aggregator $A_{\psi}$, the task decoder $D_{\omega}$, and all communication protocols, matching the worst-case white-box assumption. 

After the ego-centric warping by Eq. \eqref{eq:message}, the malicious
feature tensor of agent $i$ is \(\mathbf{f}_{i,t}^{w} \in \mathbb{R}^{c\times H\times W}\). The attacker transmits
\begin{equation}
  \widetilde{\mathbf{f}}_{i,t}^{w} = \mathbf{f}_{i,t}^{w} + \boldsymbol{\delta}_{i,t}, \quad
  \boldsymbol{\delta}_{i,t} \in \mathcal{B}_{p}(\Delta),
  \label{eq:perturbation_set}
\end{equation}
where $\boldsymbol{\delta}_{i,t}$ is the adversarial perturbation.
$\mathcal{B}_{p}(\Delta)=\{x:\lVert x\rVert_p\le \Delta\}$ is an
$\ell_p$ ball to constrain the magnitude of the perturbation. Unless stated otherwise, we adopt the imperceptibility norm
$p=\infty$ with $\Delta=0.1$ .

Let $\mathbf{y}_t$ be the ground-truth and $\ell$ the
task-specific loss. The adversarial objective is to maximize the ego agent's perception loss by manipulating the intermediate feature maps:
\begin{align}
  \max_{\{\boldsymbol{\delta}_{i,t}\}} \quad &
    \ell\Bigl(D_{\omega}\bigl(A_{\psi}(\mathbf{f}_{0,t}^{w},\dots,\mathbf{f}_{n,t}^{w} + \boldsymbol{\delta}_{n,t})\bigr), \mathbf{y}_t\Bigr), \\
  \text{s.t.}\quad & \boldsymbol{\delta}_{i,t} \in \mathcal{B}_p(\Delta),\; i\in\mathcal{M},
\label{eq:ma_attack}
\end{align}
with all other agents $(j\notin\mathcal{M})$ remaining benign.

\section{CP-uniGuard Framework}
\label{sec:framework_overview}

In this section, we present our CP-uniGuard in detail. It consists of three main components: (1) \textit{Probability-Agnostic Sample Consensus} (PASAC), (2) \textit{Collaborative Consistency Loss Verification} (CCLoss), and (3) \textit{Online Adaptive Threshold}. PASAC is designed to effectively sample a subset of collaborators for consensus verification without relying on prior probabilities of malicious intent. CCLoss is proposed to verify the consensus between the ego agent and the collaborative agents. Online Adaptive Threshold is leveraged to dynamically adjust the threshold for consensus verification.
These three components work collaboratively to detect and neutralize malicious agents in the local collaboration network. We elaborate on these three components in the following sections.

\begin{algorithm}[t]
    \caption{PASAC}
    \label{alg:pasac}
    \textbf{Input}: 
    \begin{itemize}
        \item $\{\mathbf{f}_{i,t}^w\}_{i=1}^{n-1}$, warped intermediate feature maps from collaborators. $\mathbf{f}_{0,t}$, the intermediate feature of ego agent.
        \item $A_{\psi}$, $D_{\omega}$, the aggregator and decoder.
        \item $N_{\max}$, the maximum number of selected collaborators for ego agent, and $N_{\max}\leq n-1$.
        \item $\varepsilon$, the threshold of $\mathcal{L}_\mathrm{CCLoss}$.
    \end{itemize}
    \textbf{Output}: $\{\mathbf{B}_i\}$, the set of benign collaborators
    \begin{algorithmic}[1] 
    \State $\hat{\mathbf{y}}_{0,t} \gets D_{\omega}(\mathbf{f}_{0,t})$ 
    \Procedure{\texttt{PASAC}}{$\{\mathbf{f}_{i,t}^w\}_{i=1}^{n-1}$} 
        \If{$\texttt{len}(\{\mathbf{B}_i\})\geq N_{\max}$} \Comment{Check if enough benign agents found}
        \State \Return $\{\mathbf{B}_i\}$
        \EndIf
        \If{$\texttt{len}(\{\mathbf{f}_{i,t}^w\})=1$} 
        \State $\hat{\mathbf{y}}_{k,t} \gets D_{\omega}(A_{\psi}(\mathbf{f}_{0,t}, \mathbf{f}_{k,t}^w))$ 
            \If{$\mathcal{L}_\mathrm{CCLoss}(\hat{\mathbf{y}}_{0,t}, \hat{\mathbf{y}}_{k,t})\leq\varepsilon$} \Comment{Check benign}
            \State $\{\mathbf{B}_i\} \leftarrow \{\mathbf{B}_i\}\cup\mathbf{f}_{k,t}^w$ 
            \EndIf
            \State \Return $\{\mathbf{B}_i\}$
        \EndIf
        \State $\{\mathbf{f}_{i,t}^w\}_{i=1}^{n-1}\rightarrow\{\mathbf{f}_{i,t}^w\}_{i\in \mathrm{G}_1}, \{\mathbf{f}_{i,t}^w\}_{i\in \mathrm{G}_2}$ 
        \State $\hat{\mathbf{y}}_{\mathrm{G}_1,t} \gets D_{\omega}(A_{\psi}(\mathbf{f}_{0,t}, \{\mathbf{f}_{i,t}^w\}_{i\in \mathrm{G}_1}))$ \Comment{Fusion}
        \State $\hat{\mathbf{y}}_{\mathrm{G}_2,t} \gets D_{\omega}(A_{\psi}(\mathbf{f}_{0,t}, \{\mathbf{f}_{i,t}^w\}_{i\in \mathrm{G}_2}))$ \Comment{Fusion}
        \If{$\mathcal{L}_\mathrm{CCLoss}(\hat{\mathbf{y}}_{0,t}, \hat{\mathbf{y}}_{\mathrm{G}_1,t})\leq\varepsilon$} 
        \State $\{\mathbf{B}_i\}_\mathrm{sublist} \gets \texttt{PASAC}(\{\mathbf{f}_{i,t}^w\}_{i\in \mathrm{G}_1})$ 
        \State $\{\mathbf{B}_i\}\leftarrow \{\mathbf{B}_i\}\cup \{\mathbf{B}_i\}_\mathrm{sublist}$ 
        \Else \Comment{First group contains malicious agents}
        \State $\{\mathbf{B}_i\}\leftarrow \{\mathbf{B}_i\}\cup \{\mathbf{f}_{i,t}^w\}_{i\in \mathrm{G}_1}$ 
        \EndIf 
        \If{$\mathcal{L}_\mathrm{CCLoss}(\hat{\mathbf{y}}_{0,t}, \hat{\mathbf{y}}_{\mathrm{G}_2,t})\leq\varepsilon$} 
        \State $\{\mathbf{B}_i\}_\mathrm{sublist} \gets \texttt{PASAC}(\{\mathbf{f}_{i,t}^w\}_{i\in \mathrm{G}_2})$ 
        \State $\{\mathbf{B}_i\}\leftarrow \{\mathbf{B}_i\}\cup \{\mathbf{B}_i\}_\mathrm{sublist}$ 
        \Else \Comment{Second group contains malicious agents}
        \State $\{\mathbf{B}_i\}\leftarrow\{\mathbf{B}_i\}\cup\{\mathbf{f}_{i,t}^w\}_{i\in \mathrm{G}_2}$ 
        \EndIf
        \State \Return $\{\mathbf{B}_i\}$ 
    \EndProcedure
    \end{algorithmic}
\end{algorithm}

\newcommand{\Test}{\text{Test}}
\newcommand{\BENIGN}{\mathtt{BENIGN}}
\newcommand{\CONTAM}{\mathtt{CONTAM}}

\subsection{PASAC: Probability-Agnostic Sample Consensus}
\label{sec:pasac_theory}

\subsubsection{Our Method}

To achieve the consensus of agents, the most straightforward method is to check the agents one by one. However, this method is time-consuming and computation-intensive, especially when the number of agents is large. A better method is to randomly sample a subset of agents for consensus verification at each time, such as ROBOSAC \cite{liUsAdversariallyRobust2023}. However, ROBOSAC requires the prior probabilities of malicious intent among agents, which are often unknown in real-world scenarios. To fill in this research gap, we propose PASAC.

Specifically, consider the set $V=\{0,1,\dots,n-1\}$ of collaborative agents. An unknown subset $\mathcal{M}\subset V$ (``malicious agents'') of cardinality $m=|\mathcal{M}|$ may behave arbitrarily so as to degrade collaborative perception. Denote by $\mathcal{B}=V\setminus\mathcal{M}$ the benign agents.
Given any query subset $S\subseteq V$, the aggregator performs a group-consistency test 
\begin{equation}
  \label{eq:def_test}
  \Test(S)\in\{\BENIGN,\CONTAM\},
\end{equation}
where $\Test(S)$ acts as a noisy oracle that reveals whether $S$ intersects $\mathcal{M}$. Here, $\BENIGN$ and $\CONTAM$ indicate the absence or presence of malicious agents in $S$, respectively.

\begin{definition}[$(\alpha,\beta)$-reliability]
  \label{def:reliable}
  A test oracle formulated in Eq.~\eqref{eq:def_test} is called $(\alpha,\beta)$-reliable if, for every $S\subseteq V$,
  \begin{align*}
    &\Pr\bigl[\Test(S)=\BENIGN\bigm| S\cap\mathcal{M}=\varnothing\bigr]
      \ge1-\alpha,\\[2pt]
    &\Pr\bigl[\Test(S)=\CONTAM\bigm| S\cap\mathcal{M}\neq\varnothing\bigr]
      \ge1-\beta.
  \end{align*}
  Soundness corresponds to false-positive rate $\alpha$, while completeness corresponds to false-negative rate $\beta$.
\end{definition}
In practice, choosing the CCLoss threshold $\varepsilon$ yields $(\alpha,\beta)<0.05$.

PASAC is an \emph{adaptive binary-splitting} procedure that examines ever smaller subsets until a predefined quota of benign agents has been certified.

The procedure is initialized with $(S,k)=(V, N_{\max})$, where $N_{\max}$ is a user-defined upper bound on the number of desired benign peers (e.g., $N_{\max}=10$). 
During the recursive splitting process, the maximum depth of any branch is $\lceil\log_2|S|\rceil$, which means that each malicious agent can cause at most $\lceil\log_2|S|\rceil$ $\CONTAM$ labels before being isolated.

More specifically, the ego agent will generate the collaborative perception results $\mathbf{y}_t$ based on its observation and the received messages for feature fusion from the $n-1$ other agents. Firstly, the ego agent generates its perception results $\mathbf{y}_{0,t}$ based on its own observation. Then, it randomly splits the agents into two groups $\mathrm{G}_1$ and $\mathrm{G}_2$ of equal size \textcolor{mycolor}{(nearly equal when the number of agents is odd)}. 
After receiving all the messages, the ego agent fuses the features and generates the perception results $\mathbf{y}_{\mathrm{G}_1,t}$ based on the messages from the first group and $\mathbf{y}_{\mathrm{G}_2,t}$ based on the messages from the second group.

Then, the ego agent verifies the consensus and checks if there is any malicious agent in the two groups. The consensus is verified by CCLoss to be introduced in the next section. Specifically, the CCLoss is calculated between the ego agent and each group, that is, $\mathcal{L}_\mathrm{CCLoss}(\mathbf{y}_{0,t}, \mathbf{y}_{\mathrm{G}_1,t})$ and $\mathcal{L}_\mathrm{CCLoss}(\mathbf{y}_{0,t}, \mathbf{y}_{\mathrm{G}_2,t})$. If the CCLoss exceeds a certain threshold, the group is considered benign, otherwise, it is considered to contain malicious agents. In addition,  suppose the first group is benign and the second group is verified to have malicious agents, all agents in the first group are marked as benign and incorporated in the following collaboration. For the second group, the ego agent continues to split the second group into two subgroups and repeats the consensus verification process. This process will continue until finding all the malicious agents or obtaining enough benign agents. The detailed procedures of PASAC is shown in Algorithm~\ref{alg:pasac}.

\subsubsection{Theoretical Analysis}
In this subsection, we will carry out theoretical analysis on our proposed scheme.

\paragraph{\textbf{Success Probability}} We first give the theoretical result on the success probability of PASAC.
\begin{theorem}[Success probability of PASAC]
  \label{thm:pasac_success}
  Let $T\bigl(n,m\bigr)$ be the (random) number of oracle queries incurred by
  Algorithm~\ref{alg:pasac} with $S=[n]$ and $k=N_{\max}\ge n-m$.
  Under an $(\alpha,\beta)$-reliable oracle, PASAC mis-classifies at most one
  agent with probability
  \begin{equation}
    \Pr[\text{error}]\le(\alpha+\beta)\,m\,\lceil\log_2 n\rceil.
    \label{eq:err_prob}
  \end{equation}
\end{theorem}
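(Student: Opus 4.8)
The plan is to analyze Algorithm~\ref{alg:pasac} through its associated \emph{recursion tree} $\mathcal{T}$: the root is the queried set $S=[n]$, and whenever \texttt{PASAC} is invoked on a set with more than one element it creates two children $G_1,G_2$ that partition it and issues the oracle calls $\Test(G_1),\Test(G_2)$, with singletons as leaves. As already noted in the text, balanced splitting gives $\mathcal{T}$ depth $D\le\lceil\log_2 n\rceil$, so the root-to-leaf path $P_a$ ending at the singleton $\{a\}$ of any malicious agent $a\in\mathcal{M}$ passes through at most $\lceil\log_2 n\rceil$ \emph{tested} nodes. I would first condition on the internal coin flips that decide how each set is split, so that $\mathcal{T}$ is deterministic and only the oracle's randomness remains; since the final bound will not depend on the shape of $\mathcal{T}$, it then holds unconditionally.

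Second, I would establish the deterministic implication: \emph{if every oracle call issued during the run returns the label matching the ground truth} --- $\CONTAM$ on every queried set meeting $\mathcal{M}$ and $\BENIGN$ on every queried set disjoint from $\mathcal{M}$ --- \emph{then PASAC mis-classifies no agent at all}. This is an induction on $\mathcal{T}$: a contaminated set is correctly flagged $\CONTAM$ and recursively split, so every malicious agent is eventually isolated in its singleton, flagged $\CONTAM$, and discarded; a set disjoint from $\mathcal{M}$ is correctly flagged $\BENIGN$ and all of its agents are certified at once, and because $N_{\max}\ge n-m$ is at least the number of benign agents, the quota never halts the recursion before a benign agent has been reached. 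Hence the error event is contained in the event $E$ that \emph{some} issued oracle call errs, and in particular $\Pr[\text{error}]\le\Pr[E]$ no matter whether ``error'' is read as ``some'' or as ``two or more'' mis-classifications.

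Third --- the quantitative core --- I would bound $\Pr[E]$ by a union bound over the error-prone calls, split into two families. A queried set meeting $\mathcal{M}$ can err only by returning $\BENIGN$, which by $(\alpha,\beta)$-reliability has probability $\le\beta$; and there are at most $m\lceil\log_2 n\rceil$ such sets, because the sets occupying any fixed level of $\mathcal{T}$ are pairwise disjoint, so at most $m$ of them can contain a malicious agent, over at most $\lceil\log_2 n\rceil$ tested levels. A queried set disjoint from $\mathcal{M}$ can err only by returning $\CONTAM$, with probability $\le\alpha$; the key point is that such a set is tested only because its parent was recursed into, i.e.\ flagged $\CONTAM$, so passing to the topmost erring benign node one may assume its parent genuinely meets $\mathcal{M}$, making that node the (unique) benign child of one of the $\le m\lceil\log_2 n\rceil$ contaminated sets already counted. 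Summing the two contributions yields $\Pr[E]\le\alpha\,m\lceil\log_2 n\rceil+\beta\,m\lceil\log_2 n\rceil=(\alpha+\beta)\,m\lceil\log_2 n\rceil$, which is the claimed bound.

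I expect the charging argument in the third step to be the main obstacle: one must argue carefully, despite the adaptively (data-)chosen set of queries, that every benign set that is actually tested injects into the family of contaminated sets lying on some $P_a$, so that the benign side contributes $m\lceil\log_2 n\rceil$ rather than $2m\lceil\log_2 n\rceil$; one must also handle the root (which is split but never tested) and the singleton leaves with enough care that no call is missed or double-counted. By contrast the depth bound is immediate from the balanced splitting, and the inductive correctness claim of the second step is routine once the accept-if-$\BENIGN$ / split-if-$\CONTAM$ convention of Algorithm~\ref{alg:pasac} is pinned down.
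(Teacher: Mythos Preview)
Your plan is correct and follows essentially the same route as the paper: bound the error probability by a union bound over oracle mistakes along the at most $\lceil\log_2 n\rceil$ levels of the recursion tree, with at most $m$ relevant nodes per level. Your treatment is in fact more careful than the paper's brief argument---the paper simply charges $(\alpha+\beta)$ per node on each malicious agent's root-to-leaf path and union-bounds over the $m$ paths, without explicitly separating the $\beta$-contribution of contaminated nodes from the $\alpha$-contribution of their benign siblings or spelling out the topmost-error/adaptivity issue that you flag in step three.
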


\begin{proof}
  Fix an arbitrary malicious agent $j\in\mathcal{M}$. Along the unique branch
  that isolates $j$, the subset size halves at each level until $\{j\}$ is
  reached, hence at most $\lceil\log_2 n\rceil$ nodes on that branch are
  labelled $\CONTAM$. By the union bound and Definition~\ref{def:reliable}, the
  probability that \emph{any} of those nodes is mis-labelled does not exceed
  $(\alpha+\beta)\,\lceil\log_2 n\rceil$ for each individual malicious agent. Applying the union bound over
  all $m$ malicious agents (summing their individual error probabilities) yields the total error probability in Eq.~\eqref{eq:err_prob}.
\end{proof}
Equation~\eqref{eq:err_prob} implies that setting $(\alpha+\beta)<0.01$ and $n\le100$ ensures overall error probability below~$\smash{\approx2\%}$ even
with $m=10$ attackers.

\paragraph{\textbf{Query-Complexity Bounds}}
In the worst-case scenario, we disregard the early stopping condition $k=n-m$, which is met when all benign agents are identified. We define the positive dyadic-logarithm $\bigl\lceil\log_2 x\bigr\rceil_+$ as $\max(0, \lceil\log_2 x\rceil)$.

\begin{theorem}[Upper bound]
  \label{lem:upper_complexity}
  The total number of oracle queries satisfies
  \begin{equation}
    \label{eq:upper_T}
    T(n,m)\le 2m\bigl\lceil\log_2 n\bigr\rceil + (n-m).
  \end{equation}
\end{theorem}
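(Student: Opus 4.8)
The plan is to split the oracle queries of one run of Algorithm~\ref{alg:pasac} (with the early-stopping quota disregarded, as stipulated) into two buckets according to the verdict returned — $\BENIGN$ or $\CONTAM$ — bound each bucket separately, and add them. It is convenient to regard the run as a binary recursion tree whose nodes are the subsets $S$ on which \texttt{PASAC} is invoked: a node with $|S|\ge 2$ is split into halves of sizes $\lceil|S|/2\rceil$ and $\lfloor|S|/2\rfloor$, both halves are tested, and the recursion descends only into a half flagged $\CONTAM$ (a half flagged $\BENIGN$ is certified wholesale and never re-entered). Since $T(n,m)$ is random only through the random split choices under a truthful test, it suffices to prove the bound for an arbitrary realization of the splits.

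\emph{Bounding the $\BENIGN$-returning queries.} Let $B$ be their number. Each such query tests a subset $S$ declared clean, i.e.\ $S\cap\mathcal M=\varnothing$, and this $S$ is a leaf of the recursion tree. The crux is that all such certified-clean subsets are pairwise disjoint: given two of them, either they are the two halves of a common parent, or they sit under incomparable tree nodes, or one sits inside the $\CONTAM$-half of some ancestor while the other is that ancestor's complementary $\BENIGN$-half — in each case they are disjoint. Being nonempty, pairwise disjoint subsets of $\mathcal B=V\setminus\mathcal M$ (of size $n-m$), there are at most $n-m$ of them, so $B\le n-m$.

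\emph{Bounding the $\CONTAM$-returning queries.} Let $K$ be their number (a malicious singleton may be re-tested in the size-$1$ base case, which I count with multiplicity). Charge each such query, made on a contaminated $S$, to the malicious agent $j(S):=\min(S\cap\mathcal M)$. Fix $j\in\mathcal M$. Every subset visited by the recursion that contains $j$ arises from its predecessor by retaining the unique half that still contains $j$, so these subsets form an inclusion chain whose cardinalities descend from at most $n$ to $1$ with each step at least halving; hence the chain has at most $\lceil\log_2 n\rceil+1$ members, and so at most $\lceil\log_2 n\rceil+1\le 2\lceil\log_2 n\rceil$ queries (for $n\ge 2$) are charged to $j$ — matching the remark preceding the theorem that $j$ causes at most $\lceil\log_2 n\rceil$ $\CONTAM$ labels before isolation. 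Summing over the $m$ malicious agents gives $K\le 2m\lceil\log_2 n\rceil$. Adding, $T(n,m)=K+B\le 2m\lceil\log_2 n\rceil+(n-m)$; the degenerate case $n=1$ forces $m=0$ and is trivial.

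The step I expect to be the main obstacle is making these two bounds airtight: for $B$, justifying the disjointness claim requires a short but careful case analysis on the relative positions of the parents of two certified-clean subsets in the recursion tree; and for $K$, one must pin down exactly which members of a malicious agent's inclusion chain actually incur a test — the root is split but never tested, whereas a malicious singleton is tested twice — so that the per-agent tally is genuinely at most $\lceil\log_2 n\rceil+1$. The remaining ingredients (reaching size $1$ after $\lceil\log_2 n\rceil$ halvings, and the union over the $m$ agents) are routine.
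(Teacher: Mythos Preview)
Your proposal is correct and follows the same high-level strategy as the paper's proof: view the run as a binary recursion tree, split the oracle queries by verdict, bound each bucket separately, and add. The accounting details differ slightly, and in both places your version is the cleaner one. For the $\BENIGN$ bucket, the paper simply asserts that ``every benign singleton $\{i\}$ is queried exactly once,'' which is not literally true since a benign group can be certified wholesale before ever reaching singletons; your disjointness argument is the right way to justify the $n-m$ bound. For the $\CONTAM$ bucket, the paper bounds the $m$ root-to-singleton paths by $L=\lceil\log_2 n\rceil$ each and then doubles the count to absorb contaminated siblings created at each split, arriving at $2mL$; you instead charge each $\CONTAM$ query to the least-indexed malicious agent it contains and bound each agent's charges by the length of its inclusion chain, $\le L+1\le 2L$. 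Both land on the same bound; your min-charging avoids the sibling bookkeeping and is in fact slightly tighter ($m(L+1)$ rather than $2mL$) before the final loosening. Your closing caveat about which chain members are actually tested (root not tested, malicious singleton tested twice) is exactly the wrinkle one must settle, and the net effect on the per-agent tally is zero, so the bound survives.
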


\begin{proof}
We analyse the recursion tree $\mathcal{T}$ implicitly generated by PASAC.
Each node corresponds to a subset $S\subseteq V$ that is submitted to the
oracle $\Test(\cdot)$, hence one oracle query per node.

Nodes are marked as $\CONTAM$ if $S\cap\mathcal{M}\neq\varnothing$ and
$\BENIGN$ otherwise.
Once a node is $\BENIGN$, it is never expanded again, so it is a leaf of
$\mathcal{T}$.
Conversely, every $\CONTAM$ node is split into two (roughly) equal-size
children according to the algorithm.

Let $L \triangleq \lceil\log_2 n\rceil$.  
Fix a malicious agent $j\in\mathcal{M}$ and follow the unique
root-to-leaf path that eventually isolates $\{j\}$.
Starting from $|V|=n$, each split at most halves the size of the contaminated
subset that contains $j$.
After $L$ levels the subset size is $<2m$; at most another $L$ splits are
needed before every malicious singleton appears as a leaf.
Hence, \emph{the depth of any contaminated path is bounded by~$L$}.  
There are $m$ such root-to-leaf paths (one per malicious agent).  
Each path contains at most $L$ contaminated nodes.
Moreover, when a contaminated node $S$ is split, at most one \emph{additional}
contaminated sibling (the "other half'') can be created.
Therefore, the total number of contaminated internal nodes satisfies $|\textsc{Contam‐int}| \;<\; 2mL$.

Every benign singleton $\{i\}$ with $i\notin\mathcal{M}$ is queried exactly
once and produces a benign leaf.
Hence there are precisely $n-m$ such leaves.
Adding the two contributions yields
\begin{equation}
  \begin{aligned}
    T(n,m)\;\le & \; 2mL \;+\; (n-m) \\
     =&2m\bigl\lceil\log_2 n\bigr\rceil \;+\; (n-m),
  \end{aligned}
\end{equation}
which is exactly Eq.~\eqref{eq:upper_T}.
\end{proof}

\subsection{Collaborative Consistency Loss Verification}
\label{sec:multi_test_collaborative_consistency_loss_verification}

To verify the consensus between the ego agent and the collaborative agents, we design a novel loss function, \textit{Collaborative Consistency Loss} (CCLoss), which is used to calculate the discrepancy between the ego agent and the collaborative agents and verify the consensus. It consists of two parts for object detection and BEV segmentation tasks, respectively.

\subsubsection{Collaborative Consistency Loss for Object Detection}

Given the intermediate feature maps $\mathbf{f}_{0,t}$ of the ego agent and a set of intermediate feature maps $\{\mathbf{f}_{1,t}, \ldots, \mathbf{f}_{i,t}\}$ from the collaborative agents at time $t$. The ego agent can generate two object detection results (bounding boxes proposals) by object detection decoder $D_{\omega}^{(\mathrm{det})}$:
\begin{equation}
    \hat{\mathbf{y}}_{0,t}=D_{\omega}^{(\mathrm{det})}(\mathbf{f}_{0,t}),
\end{equation}
\begin{equation}
    \hat{\mathbf{y}}_{\mathrm{fuse},t}=D_{\omega}^{(\mathrm{det})}(A_{\psi}(\mathbf{f}_{0,t}, \mathbf{f}_{1,t}, \ldots, \mathbf{f}_{i,t})),
\end{equation}
where $\hat{\mathbf{y}}_{0,t}=\{{y}^{(1)},\cdots,{y}^{(L)}\}$ and $\hat{\mathbf{y}}_{\mathrm{fuse},t}=\{{y}_{\mathrm{fuse}}^{(1)},\cdots,{y}_{\mathrm{fuse}}^{(L)}\}$ are the bounding box proposals of the ego agent and the collaborative agents, respectively.


Let $I_c$ and $I_{\mathrm{fuse},c}$ be the two sets of indices of all the proposed bounding boxes in $\hat{\mathbf{y}}_{0,t}$ and $\hat{\mathbf{y}}_{\mathrm{fuse},t}$ that are predicted to some class $c \in \mathcal{C}$, respectively. Let $\mathfrak{S}_c$ be all matches between $I_c$ and $I_{\mathrm{fuse},c}$ that maps each $l \in I_c$ to a unique $l' \in I_{\mathrm{fuse},c}$. Note that to guarantee such uniqueness when $|I_{\mathrm{fuse},c}| < |I_c|$, we pad $I_{\mathrm{fuse},c}$ to the same size as $I_c$ with arbitrary ``dummy'' indices not appeared in $\{1,\cdots,L\}$ and associate each ``dummy'' index with an empty box $\varnothing$. Then, the CCLoss between $\hat{\mathbf{y}}_{0,t}$ and $\hat{\mathbf{y}}_{\mathrm{fuse},t}$ is defined by:
\begin{equation}
  \begin{aligned}
  &\mathcal{L}_\mathrm{CCLoss}^{(\mathrm{det})}(\hat{\mathbf{y}}_{0,t}, \hat{\mathbf{y}}_{\mathrm{fuse},t}) \\
  &= 1-\frac{1}{|\mathcal{C}|}\sum_{c\in\mathcal{C}} \min_{\sigma\in\mathfrak{S}_c} \frac{1}{|I_c|} \sum_{l\in I_c} \mathcal{L}_\mathrm{box}({y}^{(l)},{y}_{\mathrm{fuse}}^{(\sigma(l))};c),
  \end{aligned}
  \label{eq:cc_loss_box}
\end{equation}
where the optimal match for each class $c$ is solved using the Hungarian match algorithm \cite{kuhn1955hungarian}. Here, for any class $c \in \mathcal{C}$ and any two arbitrary bounding box proposals $\mathbf{y}$ and $\mathbf{y}'$,
\begin{equation}
  \begin{aligned}
  & \mathcal{L}_\mathrm{box}(\mathbf{y},\mathbf{y}';c) \\
  &= \frac{1}{1+\phi}\left(\max(p_c - p'_c,0) + \phi(1-\mathrm{IoU}(\mathbf{y},\mathbf{y}'))\right),
  \label{eq:cc_loss_box_def}
  \end{aligned}
\end{equation}
where $p_c$ and $p'_c$ represent the posteriors of class $c$ associated with bounding boxes $\mathbf{y}$ and $\mathbf{y}'$, respectively, and $\phi$ is the parameter balancing the two terms. Note that if $\mathbf{y}' = \varnothing$ is an empty box (e.g., due to the padding), the equation above will be reduced to $\mathcal{L}_\mathrm{box}(\mathbf{y},\mathbf{y}';c) = p_c + \phi$, since $p'_c$ and $\mathrm{IoU}(\mathbf{y},\mathbf{y}')$ are both zero.

From Eq.~\eqref{eq:cc_loss_box}, we can see that when the two bounding box proposals are consistent, the value of $\mathcal{L}_\mathrm{CCLoss}^{(\mathrm{det})}$ will be close to 1. On the contrary, if these two proposals are different, the value $\mathcal{L}_\mathrm{CCLoss}^{(\mathrm{det})}$ will be close to 0, which means there may be a malicious agent in the collaborative agents.

\subsubsection{Collaborative Consistency Loss for BEV Segmentation}

Given the intermediate feature maps $\mathbf{f}_{0,t}$ of the ego agent and a set of intermediate feature maps $\{\mathbf{f}_{1,t}, \ldots, \mathbf{f}_{i,t}\}$ from the collaborative agents at time $t$. The ego agent can generate two BEV segmentation maps by BEV segmentation decoder $D_{\omega}^{(\mathrm{seg})}$:
\begin{equation}
    \hat{\mathbf{y}}_{0,t}=D_{\omega}^{(\mathrm{seg})}(\mathbf{f}_{0,t}),
\end{equation}
\begin{equation}
    \hat{\mathbf{y}}_{\mathrm{fuse},t}=D_{\omega}^{(\mathrm{seg})}(A_{\psi}(\mathbf{f}_{0,t}, \mathbf{f}_{1,t}, \ldots, \mathbf{f}_{i,t})),
\end{equation}
where $\hat{\mathbf{y}}_{0,t}$ and $\hat{\mathbf{y}}_{\mathrm{fuse},t}$ are 3D matrices and their sizes are in $\mathbb{R}^{W_D\times H_D\times C}$ with $W_D$, $H_D$, and $C$ being the width, height, and the number of classes of the BEV segmentation map, respectively. 

Following the idea that enables CP to achieve consensus rather than conflict with the ego agent's perception result, we carefully design the CCLoss for BEV segmentation taskto measure the discrepancy between the ego agent and the collaborative agents, which is formulated as:
\begin{equation}
    \begin{aligned}
        &\mathcal{L}_\mathrm{CCLoss}^{(\mathrm{seg})}(\hat{\mathbf{y}}_{0,t}, \hat{\mathbf{y}}_{\mathrm{fuse},t})=\\
        &\frac{\sum^{|\mathcal{C}|}_{j=1}w_j\sum^{W_D\cdot H_D}_{i=1}p^0_{i,j}p^\mathrm{fuse}_{i,j}}{\sum^{|\mathcal{C}|}_{j=1}w_j\left(\sum^{W_D\cdot H_D}_{i=1}p^0_{i,j} + \sum^{W_D\cdot H_D}_{i=1}p^\mathrm{fuse}_{i,j}\right)}
    \end{aligned}
\end{equation}
where $c$ is the number of classes, $p^0_{i,j}$ and $p^\mathrm{fuse}_{i,j}$ are the probabilities of the $j$-th class at the $i$-th pixel in the BEV segmentation map $\hat{\mathbf{y}}_{0,t}$ and $\hat{\mathbf{y}}_{\mathrm{fuse},t}$, respectively, and $w_j$ is the weight of the $j$-th class, defined as the inverse frequency of the class $w_j=1/(\sum^{|\mathcal{C}|}_{j=1}(p^0_{i,j}+ p^\mathrm{fuse}_{i,j}))^2$. For the numerator of $\mathcal{L}_\mathrm{CCLoss}^{\mathrm{seg}}$, it calculates the weighted sum of the product of the probabilities for each pixel and each class, which essentially measures the overlap between the two distributions. The weight $w_j$ ensures that the contribution of each class is adjusted according to its importance or frequency. The denominator sums up the weighted sums of the probabilities from both the ego agent's prediction map and the fused segmentation maps for each class. It represents the total probability mass for each class, adjusted by the weights. Finally, the fraction measures the similarity between the two distributions. If these two distributions are similar, the value of $\mathcal{L}_\mathrm{CCLoss}^{\mathrm{seg}}$ will be close to 1. On the contrary, if these two distributions are different, the value $\mathcal{L}_\mathrm{CCLoss}{\mathrm{seg}}$ will be close to 0, which means there may be a malicious agent in the collaborators.

\subsection{Online Adaptive Threshold via Dual Sliding Windows}
\label{sec:adaptive-threshold}

To maintain $(\alpha,\beta)$‑reliability under dynamic environment, we propose an online adaptive threshold mechanism via dual sliding windows. We keep \emph{two} fixed-length sliding windows: one stores the most recent scores labelled
$\BENIGN$, the other stores those labelled $\CONTAM$.  At every frame we take
the upper $(1-\alpha)$‑quantile of the benign window and the lower
$\beta$‑quantile of the contaminated window, average them, and obtain a
provisional threshold.  An exponentially weighted moving average (EWMA) then converts this provisional value into the working threshold $\varepsilon_t$ used for classification.

Specifically, let $Z_t = \mathcal{L}_\mathrm{CCLoss}\bigl(\hat{\mathbf{y}}_{0,t},
  \hat{\mathbf{y}}_{\text{fuse},t}\bigr)\in[0,1]$
be the per-frame consistency score (larger means better agreement).
After the oracle labels the current subset $S_t$, we append $Z_t$ to the
corresponding deque:
\begin{equation}
  \mathcal{W}_t^{P}\;(\BENIGN),\ 
  \mathcal{W}_t^{N}\;(\CONTAM),
\end{equation}
each of the maximum length $W$.

For any window $\mathcal{W}$ of size $n$, denote the empirical
$q$‑quantile by
$
  \hat q_{q}(\mathcal{W})
  =\inf\bigl\{z: \tfrac1n\sum_{Z\in\mathcal{W}}\!\mathbf 1\{Z\le z\}\ge q\bigr\}.
$
We use the \emph{upper} $(1-\alpha)$‑quantile of benign scores and the
\emph{lower} $\beta$‑quantile of contaminated scores:
\begin{equation}
  \hat q_{1-\alpha}^{P}(t)=\hat q_{1-\alpha}\!\bigl(\mathcal{W}_t^{P}\bigr),
  \quad
  \hat q_{\beta}^{N}(t)=\hat q_{\beta}\!\bigl(\mathcal{W}_t^{N}\bigr).
\end{equation}
Then, we can calculate the provisional threshold $\tilde\varepsilon_t$ as the mid‑point:
\begin{equation}
  \tilde\varepsilon_t
  =\frac{\hat q_{1-\alpha}^{P}(t)+\hat q_{\beta}^{N}(t)}{2}.
  \label{eq:midpoint-thr}
\end{equation}
In addition, we apply an EWMA update to smooth the provisional threshold:
\begin{equation}
  \varepsilon_t=(1-\eta)\,\varepsilon_{t-1}+\eta\,\tilde\varepsilon_t,
  \qquad 0<\eta\le1.
  \label{eq:smooth}
\end{equation}
Then, we can classify the current subset $S_t$ as benign or contaminated based on the following rule:
\begin{equation}
  \Test(S_t)=
  \begin{cases}
    \BENIGN, & \text{if } Z_t\ge\varepsilon_t,\\[2pt]
    \CONTAM, & \text{otherwise}.
  \end{cases}
  \label{eq:test-rule-dual}
\end{equation}

\begin{algorithm}[t]
  \caption{Dual‑Window Online Threshold Update}
  \label{alg:dual-threshold}
  \begin{algorithmic}[1]
    \Require New score $Z_t$, decision $\hat y_t\in\{\BENIGN,\CONTAM\}$; windows $\mathcal W^P,\mathcal W^N$; parameters $(\alpha,\beta,W,\eta)$.
    \If{$\hat y_t=\BENIGN$} \State push\_back$(\mathcal W^P,Z_t)$
    \Else \State push\_back$(\mathcal W^N,Z_t)$ \EndIf
    \State truncate both windows to length $W$
    \If{$|\mathcal W^P|\!\ge\!W_{\min}$ \textbf{and} $|\mathcal W^N|\!\ge\!W_{\min}$}
      \State compute $\hat q_{1-\alpha}^{P},\hat q_{\beta}^{N}$
      \State $\tilde\varepsilon\gets(\hat q_{1-\alpha}^{P}+\hat q_{\beta}^{N})/2$
      \State $\varepsilon\gets(1-\eta)\varepsilon+\eta\,\tilde\varepsilon$
    \EndIf
    \State \Return $\varepsilon$
  \end{algorithmic}
\end{algorithm}

\begin{theorem}[Reliability guarantee]
  \label{thm:reliability}
Assume that within each window, benign (resp.\ contaminated) scores are
i.i.d.\ with cumulative distribution functions (CDFs) $F_P$ (resp.\ $F_N$) such that
$F_P^{-1}(1-\alpha)>F_N^{-1}(\beta)$.  If
$|\mathcal W_t^{P}|,|\mathcal W_t^{N}|\ge W_{\min}$,
then
\begin{align}
  &\Pr\bigl[Z_t\ge\varepsilon_t\mid S_t\cap\mathcal M=\varnothing\bigr]
    \ge 1-\alpha,\\
  &\Pr\bigl[Z_t<\varepsilon_t\mid S_t\cap\mathcal M\neq\varnothing\bigr]
    \ge 1-\beta.
\end{align}
\end{theorem}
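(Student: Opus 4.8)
The plan is to reduce both distributional guarantees to one deterministic fact: that the working threshold $\varepsilon_t$ produced by Eqs.~\eqref{eq:midpoint-thr}--\eqref{eq:smooth} lies, with high probability, strictly inside the ``safe band'' $\bigl(F_N^{-1}(\beta),\,F_P^{-1}(1-\alpha)\bigr)$, which is nonempty exactly because of the separation hypothesis $F_P^{-1}(1-\alpha)>F_N^{-1}(\beta)$. Once $\varepsilon_t$ is confined to this band, the two claims follow from the defining property of a quantile: conditioning on $S_t\cap\mathcal M=\varnothing$ makes $Z_t\sim F_P$, and $\varepsilon_t$ at or below the benign-side cutoff forces $\Pr[Z_t\ge\varepsilon_t]\ge 1-\alpha$; conditioning on $S_t\cap\mathcal M\neq\varnothing$ gives a score whose law is dominated in the relevant tail by $F_N$, and $\varepsilon_t$ sitting above the contaminated-side cutoff forces $\Pr[Z_t<\varepsilon_t]\ge 1-\beta$. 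So the argument has three parts: (a) show the empirical quantiles $\hat q_{1-\alpha}^{P}(t)$, $\hat q_{\beta}^{N}(t)$ are close to their population counterparts; (b) deduce that the midpoint $\tilde\varepsilon_t$ of Eq.~\eqref{eq:midpoint-thr} lands in the safe band; (c) show the EWMA of Eq.~\eqref{eq:smooth} keeps it there.

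For (a) I would use a finite-sample Dvoretzky--Kiefer--Wolfowitz bound on each deque: conditioned on $|\mathcal W_t^{P}|\ge W_{\min}$, the benign empirical CDF is uniformly within $\sqrt{\ln(2/\delta)/(2W_{\min})}$ of $F_P$ with probability at least $1-\delta$, and analogously for $\mathcal W_t^{N}$ and $F_N$; under a mild local regularity assumption on $F_P,F_N$ near the quantiles of interest (a positive density, say), this transfers to $|\hat q_{1-\alpha}^{P}(t)-F_P^{-1}(1-\alpha)|$ and $|\hat q_{\beta}^{N}(t)-F_N^{-1}(\beta)|$ each being at most a quarter of the gap $F_P^{-1}(1-\alpha)-F_N^{-1}(\beta)$, provided $W_{\min}$ is large enough. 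Substituting into Eq.~\eqref{eq:midpoint-thr} then puts $\tilde\varepsilon_t$ strictly between the two population quantiles with a margin of at least a quarter of the gap on either side, which is (b). For (c), under the in-window stationarity hypothesis the same containment holds for every provisional threshold still influencing the EWMA memory; since $\varepsilon_t=(1-\eta)\varepsilon_{t-1}+\eta\tilde\varepsilon_t$ with $0<\eta\le1$ is a convex combination of points in a convex (interval) set, a one-step induction from an initialization inside the band shows $\varepsilon_t$ stays in the safe band for all $t$ past the burn-in. Folding the failure probability $\delta$ into $\alpha,\beta$ by a union bound (or absorbing it by enlarging $W_{\min}$), and then applying the quantile inequality together with monotonicity of $F_P,F_N$, closes the argument.

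The step I expect to be the main obstacle is making (a) honest in the face of the \emph{adaptive} nature of the windows: which past frames land in $\mathcal W_t^{P}$ versus $\mathcal W_t^{N}$ is itself decided by earlier outputs of $\Test(\cdot)$, so the samples inside a deque are not literally i.i.d.\ draws from a fixed $F_P$ or $F_N$ unless one assumes---as the theorem's ``within each window \dots\ i.i.d.'' phrasing implicitly does---that the relabelling does not distort the conditional laws (e.g.\ via an exchangeability or stationary-ergodic condition on the score stream, together with correctness of past labels charged against the error budget of Theorem~\ref{thm:pasac_success}). Pinning down the weakest such condition, and verifying that the DKW step survives under it, is where the real care lies; the quantile inequality, the monotonicity manipulations, and the convexity argument for the EWMA are routine. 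A secondary subtlety is the burn-in: the guarantee only activates once \emph{both} deques have reached $W_{\min}$ samples, which is precisely why the statement is conditioned on $|\mathcal W_t^{P}|,|\mathcal W_t^{N}|\ge W_{\min}$.
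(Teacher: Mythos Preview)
Your proof is correct in structure and considerably more careful than the paper's. The paper's argument is essentially your final step alone: it \emph{sets} $\varepsilon_t=(q_P+q_N)/2$ with $q_P=F_P^{-1}(1-\alpha)$ and $q_N=F_N^{-1}(\beta)$ as if the population quantiles were known exactly, observes that the separation hypothesis gives $q_N<\varepsilon_t<q_P$, and then applies monotonicity of the CDFs to read off the two probability bounds directly. In other words, the paper silently collapses your stages (a)--(c) by treating the empirical quantile estimates and the EWMA output as already equal to their population targets; the hypothesis $|\mathcal W_t^{P}|,|\mathcal W_t^{N}|\ge W_{\min}$ appears in the statement but is never used in the proof body, and Eq.~\eqref{eq:smooth} is ignored entirely.

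What your route buys is an honest accounting of why the working threshold, built from finite deques via Eqs.~\eqref{eq:midpoint-thr}--\eqref{eq:smooth}, lands in the safe band at all: the DKW-to-quantile transfer, the midpoint arithmetic, and the convexity-of-an-interval argument for the EWMA are exactly the content the paper leaves out. Your flagged obstacle---that the adaptive routing of scores into $\mathcal W_t^{P}$ versus $\mathcal W_t^{N}$ breaks literal i.i.d.\ sampling---is real, and the paper dodges it simply by stipulating ``within each window \dots\ i.i.d.'' as a hypothesis rather than a conclusion; your suggestion to treat this as an assumption to be justified separately (or weakened to exchangeability) is the right instinct. The paper's approach is shorter, but at the price of proving a statement about the idealized population-midpoint threshold rather than the one the algorithm actually computes.
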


\begin{proof}
Let $q_P = F_P^{-1}(1-\alpha)$, $q_N = F_N^{-1}(\beta)$, and $\varepsilon_t = (q_P + q_N)/2$.
Under the assumption $q_N < q_P$, we have $q_N < \varepsilon_t < q_P$.

\textbf{Benign case.} If $S_t\cap\mathcal M=\varnothing$, then $Z_t\sim F_P$. Because $F_P$ is non‑decreasing and $\varepsilon_t \le q_P$, we have
\begin{equation}
  \begin{aligned}
    &F_P(\varepsilon_t) \le F_P(q_P) = \alpha \\
    & \Longrightarrow 
    \Pr\bigl[Z_t \ge \varepsilon_t\bigr] = 1 - F_P(\varepsilon_t) \ge 1 - \alpha.
  \end{aligned}
\end{equation}

\textbf{Contaminated case.} If $S_t\cap\mathcal M\neq\varnothing$, then $Z_t\sim F_N$. Since $\varepsilon_t \ge q_N$, we have
\begin{equation}
  \begin{aligned}
    &F_N(\varepsilon_t) \ge F_N(q_N) = 1 - \beta \\
    & \Longrightarrow 
    \Pr\bigl[Z_t < \varepsilon_t\bigr] = F_N(\varepsilon_t) \ge 1 - \beta.
  \end{aligned}
\end{equation}

Combining the two cases, we complete the proof.
\end{proof}

Alg.~\ref{alg:dual-threshold} details the online update. This dual‑window rule adapts rapidly to changing attack strategies and various environments.

\section{Experiments}
\label{sec:experiments}

\begin{figure*}[t]
    \centering
    \includegraphics[width=.9\linewidth]{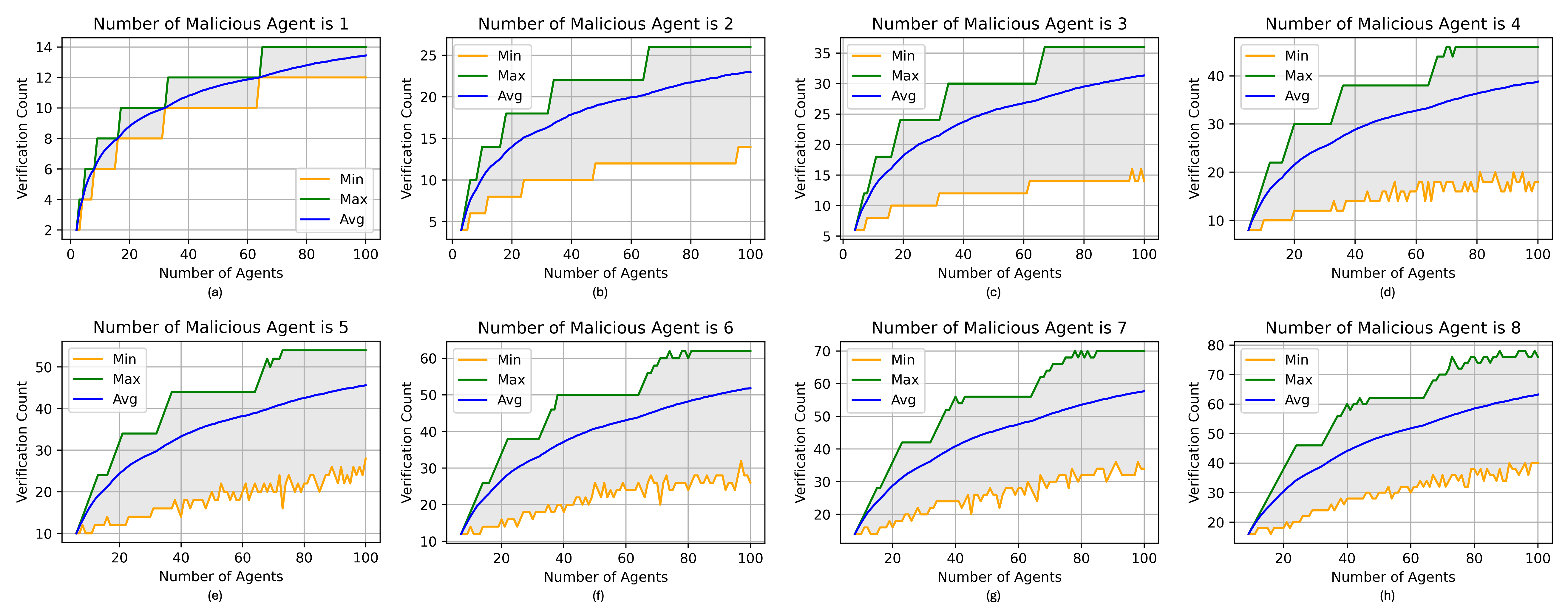}
    \vspace{-3mm}
    \caption{\textbf{Quantitative results of PASAC:} Number of Overall Agents vs Verification Count.}
    \label{fig:pasac_analysis}
    \vspace{-3mm}
\end{figure*}

\begin{figure*}[t]
    \centering
    \includegraphics[width=.9\linewidth]{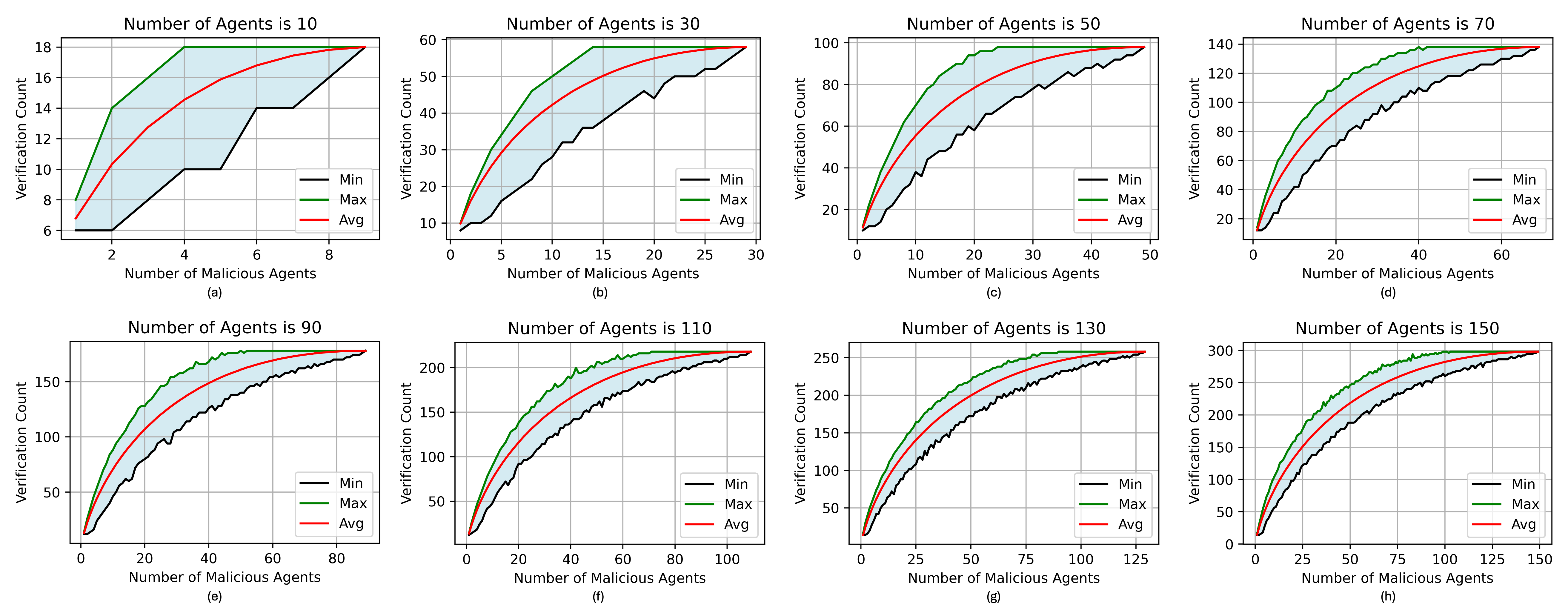}
    \vspace{-3mm}
    \caption{\textbf{Quantitative results of PASAC:} Number of Malicious Agents vs Verification Count.}
    \label{fig:pasac_analysis_2}
    \vspace{-3mm}
\end{figure*}

\begin{table*}[t]
    \caption{\textbf{Quantitative results on BEV segmentation task in V2X-Sim dataset.} Upper-bound denotes collaborative perception with all benign agents. Lower-bound is individual perception.}
    \vspace{-3mm}
    \label{tab:quantitative_results}
    \resizebox{1\linewidth}{!}{
    \begin{tabular}{p{5cm}|ccccccc|c}
        \toprule        
        { \textbf{Method} }& {Vehicle} & {Sidewalk} & {Terrain} & {Road} & {Buildings} & {Pedestrian} & {Vegetation} & \textbf{mIoU} \\
        \midrule
        Upper-bound & 55.58 & 48.20 & 47.33 & 69.60 & 29.34 & 21.67 & 41.02 & 40.45 \\
        CP-uniGuard (against FGSM attack) & 52.76 & 46.35 & 46.67 & 68.32 & 28.98 & 20.51 & 40.15 & 39.30 \\
        CP-uniGuard (against C\&W attack) & 49.22 & 44.08 & 44.76 & 65.58 & 30.12 & 20.83 & 39.10 & 37.95 \\
        CP-uniGuard (against PGD attack) & 52.84 & 46.41 & 46.73 & 68.41 & 29.01 & 20.48 & 40.16 & 39.34 \\
        Lower-bound & 47.06 & 42.46 & 43.78 & 64.07 & 30.51 & 21.21 & 37.32 & 37.09 \\ \midrule
        No Defense (FGSM attack) & 26.80 & 27.21 & 29.05 & 36.41 & 16.44 & 12.05 & 22.99 & 21.57 \\
        No Defense (C\&W attack) & 34.53 & 35.66 & 35.54 & 56.59 & 24.27 & 13.37 & 34.10 & 29.80 \\
        No Defense (PGD attack) & 22.50 & 19.63 & 15.42 & 15.33 & 9.18 & 8.29 & 22.72 & 14.34 \\
        \bottomrule
    \end{tabular}
    }
    \vspace{-4mm}
\end{table*}

\subsection{Experimental Setup}

\textbf{Datasets and Evaluation Metrics.} In our experiments, we leverage V2X-Sim \cite{liV2XSimMultiAgentCollaborative2022} \textcolor{mycolor}{and DAIR-V2X \cite{yuDAIRV2XLargeScaleDataset2022} as our datasets, which are simulated and real-world datasets, respectively.} In addition, to evaluate the performance of the object detection and BEV segmentation tasks, we adopt average precision (AP) with Intersection over Union (IoU) thresholds 0.5 and 0.7 (AP@0.5 and AP@0.7) for object detection task, and mean Intersection over Union (mIoU) for BEV segmentation task. We also use Verification Count to evaluate the performance of PASAC, which is the total number of times that malicious agents are checked.

\textbf{Implementation Details.} We leverage the backbone which is the same as \cite{8578474} for the object detection task, and use U-Net \cite{ronnebergerUNetConvolutionalNetworks2015} as the backbone for the BEV segmentation task. The fusion method for the both tasks is V2VNet \cite{10.1007/978-3-030-58536-5_36}.
Our experiment is deployed on a computer consisting of 2 Intel(R) Xeon(R) Silver 4410Y CPUs (2.0GHz), four NVIDIA RTX A5000 GPUs, and 512GB DDR4 RAM.
As for the implementation of adversarial attacks, we employ three kinds of attacks: fast gradient sign method (FGSM) \cite{goodfellow2015explainingharnessingadversarialexamples}, Carlini \& Wagner (C\&W) \cite{carlini2017evaluatingrobustnessneuralnetworks}, and the projected gradient descent (PGD) \cite{madry2018towards}. For each attack, 
we set the maximum perturbation $\delta_\mathrm{max} = 0.1$, iterations steps $T = 15$, and the step size $\gamma = 0.01$.

\begin{figure*}[t]
    \centering
    \includegraphics[width=1\linewidth]{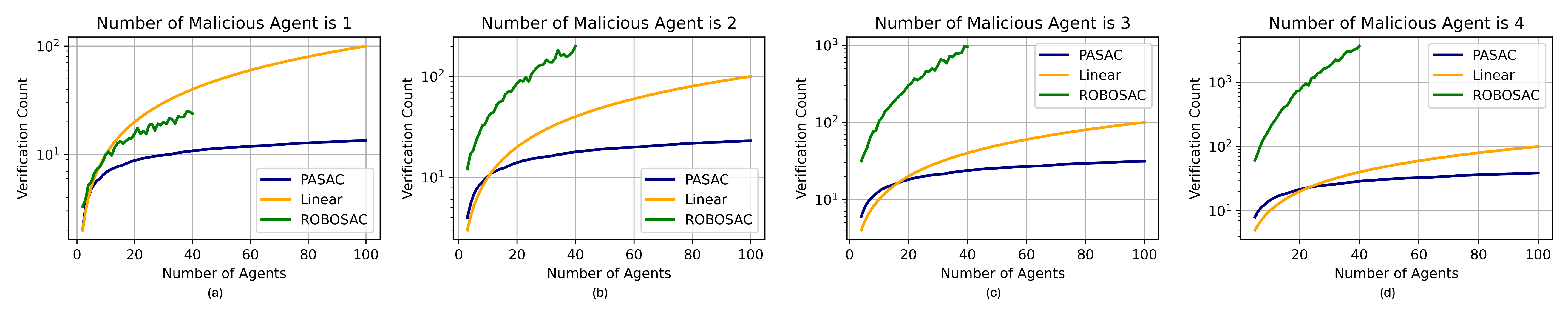}
    \vspace{-8mm}
    \caption{\textbf{Comparison results of PASAC, ROBOSAC and Linear Sampling.} The y-axis represents the verification count, which is in logarithmic scale.}
    \label{fig:pasac_analysis_4}
    \vspace{-5mm}
\end{figure*}

\subsection{Evaluation and Analysis}
\subsubsection{Evaluation of CP-uniGuard.} 
We evaluate the efficacy of our CP-uniGuard scheme against a variety of adversarial attacks. The outcomes of these evaluations are detailed in Table \ref{tab:quantitative_results}, Table \ref{tab:quantitative_results_object_detection}, and Table \ref{tab:performance_comparison_dair_v2x}. As shown in Table \ref{tab:quantitative_results}, in scenarios where the CP system lacks defensive mechanism, the mIoU across all three attack modalities significantly falls below the established lower bound, registering at 37.09\%. This substantial degradation in performance underscores the effectiveness of the adversarial attacks implemented. Conversely, our CP-uniGuard framework can effectively 
defend against these attacks and achieve an mIoU that closely approaches the upper bound of 40.45\%.

We also evaluate our approach on object detection (Table~\ref{tab:quantitative_results_object_detection}). CP-uniGuard consistently achieves the best AP@0.5 and AP@0.7 under all attack scenarios. For example, against PGD, it obtains 80.4 (AP@0.5) and 78.3 (AP@0.7), outperforming ROBOSAC by 2.5 and 2.7 percentage points, respectively. For C\&W attacks, the improvements over ROBOSAC are even larger (5.7 and 6.5 percentage points).

\textcolor{mycolor}{Finally, we evaluate our method in the real-world dataset, DAIR-V2X (Table~\ref{tab:performance_comparison_dair_v2x}). As shown in the table, our method achieves the best performance among all compared defense methods, with AP@0.5 and AP@0.7 scores of 58.31\% and 55.32\%, respectively. This not only significantly outperforms both ROBOSAC and adversarial training, but also closes most of the gap to the upper bound where no attack is present. These results further validate the effectiveness and robustness of our approach on real-world datasets.
}
\begin{table}[t]
    \caption{\textbf{Quantitative results on object detection task in V2X-Sim dataset.} The upper-bound represents collaborative perception with all benign agents, while the lower-bound corresponds to individual perception.}
    \label{tab:quantitative_results_object_detection}
    \vspace{-3mm}
    \resizebox{1\linewidth}{!}{
    \begin{tabular}{l|cc}
    \toprule
    \textbf{Method} & \textbf{AP@0.5} & \textbf{AP@0.7} \\ \midrule
    Upper-bound++ & 81.8 & 79.6 \\
    \midrule
    
    PGD Trained (White-box Defense) & 75.6 & 73.0 \\ 
    ROBOSAC (against PGD attack) & 77.9 & 75.6 \\
    CP-uniGuard (against PGD attack) & \textbf{80.4} & \textbf{78.3} \\
    \midrule
    
    C\&W on PGD Trained (Black-box Defense) & 43.2 & 40.8 \\ 
    ROBOSAC (against C\&W attack) & 74.5 & 71.1 \\
    CP-uniGuard (against C\&W attack) & \textbf{80.2} & \textbf{77.6} \\
    
    \midrule
    Lower-bound & 64.1 & 62.0 \\
    No Defense (PGD attack) & 44.2 & 43.7 \\ \bottomrule
    \end{tabular}
    }
\end{table}

\begin{table}[t]
\centering
\caption{\textbf{Quantitative results on DAIR-V2X dataset.}}
\label{tab:performance_comparison_dair_v2x}
\vspace{-3mm}
\resizebox{1\linewidth}{!}{
\begin{tabular}{p{5cm}|c|c}
\toprule
\textbf{Method} & \textbf{AP@0.5} & \textbf{AP@0.7} \\
\midrule
No Defense (PGD Attack) & 36.55 & 34.12 \\
AdvTrain & 50.21 & 47.45 \\
ROBOSAC & 42.19 & 40.52 \\
CP-uniGuard (ours) & \textbf{58.31} & \textbf{55.32} \\
\midrule
Upperbound & 59.43 & 56.19 \\
\bottomrule
\end{tabular}
}
\vspace{-5mm}
\end{table}

\subsubsection{Evaluation of PASAC.} To investigate the performance of PASAC, we conduct extensive experiments to study the relationship between the verification count and the number of benign agents and malicious agents. We plot two set of figures which are shown in Fig. \ref{fig:pasac_analysis} and Fig. \ref{fig:pasac_analysis_2}.
In Fig. \ref{fig:pasac_analysis}, the x-axis represents the number of benign agents and the y-axis represents the verification count. There are three lines in each subfigure, which represent the minimum, average, and maximum verification count, respectively. We can observe that the verification count increases with the number of collaborative agents and the growth trend is fast at the beginning and then becomes slow. In addition, the verification count is far less than the total number of agents, which indicates that PASAC is efficient in sampling collaborators.
In addition, Fig. \ref{fig:pasac_analysis_2} study the number of malicious agents vs verificaton count with fixed number of  overall agents. We further compare the sample efficiency of different methods in the following section.

\begin{table}[t]
    \caption{\textbf{Comparison results} between ROBOSAC and PASAC. In this experiment, the attack ratios are known to the ROBOSAC method.}
    \label{tab:comparison_results}
    \vspace{-3mm}
    \resizebox{1\linewidth}{!}{
    \begin{tabular}{c|ccc|ccc}
    \toprule
     & \multicolumn{3}{c|}{ROBOSAC} & \multicolumn{3}{c}{PASAC (Ours)} \\ \midrule
    \multirow{2}{*}{Attack Ratio}& \multicolumn{3}{c|}{Verification Count} & \multicolumn{3}{c}{Verification Count} \\
                                  & Min & Max & Avg & Min & Max & Avg \\ \midrule
    0.8 & 1 & 17 & 4.73 & 8 & \textbf{8} & 8.00 \\ 
    0.6 & 1 & 46 & 8.29 & 6 & \textbf{8} & \textbf{7.59} \\ 
    0.4 & 1 & 39 & 10.36 & 4 & \textbf{8} & \textbf{6.60} \\ 
    0.2 & 1 & 19 & 4.89 & 4 & \textbf{6} & \textbf{4.79} \\ \midrule
    Average & 1.00 & 30.25 & 7.06 & 5.50 & \textbf{7.50} & \textbf{6.74} \\ \bottomrule
    \end{tabular}
    }
    \vspace{-1mm}
\end{table}

\begin{table}[t]
    \centering
    \caption{\textbf{Comparison of group checking latency.}}
    \label{tab:latency}
    \vspace{-3mm}
    \resizebox{1\linewidth}{!}{
    \begin{tabular}{c|c|c}
    \toprule
    Num Agent & Method & Latency per group checking (s)   \\
    \midrule
    \multirow{3}{*}{5}         & ROBOSAC &1.62     \\
              & PASAC &1.71     \\
              & Linear &1.55     \\
    \midrule
    \multirow{3}{*}{4}         & ROBOSAC &1.42     \\
              & PASAC &1.47     \\
              & Linear &1.34     \\
    \midrule
    \multirow{3}{*}{3}         & ROBOSAC &1.02     \\
              & PASAC &1.07     \\
              & Linear &1.03     \\
    \bottomrule
    \end{tabular}
    }
    \vspace{-5mm}
    \end{table}

\subsubsection{Comparison Results in Sampling Efficiency.}
We evaluate the sampling efficiency of PASAC compared to the previous state-of-the-art, ROBOSAC \cite{liUsAdversariallyRobust2023}, following its experimental setup with known attack ratios. As shown in Table~\ref{tab:comparison_results}, PASAC consistently requires fewer verification steps than ROBOSAC across all attack ratios, and its results are notably more stable (e.g., much lower maximum verification count).
We further compare PASAC, ROBOSAC, and Linear Sampling without prior knowledge of malicious agent ratios. As shown in Fig.~\ref{fig:pasac_analysis_4}, PASAC consistently requires fewer verification steps than the others, while ROBOSAC's efficiency sharply drops when attack ratios are unknown. This highlights PASAC's superior sampling efficiency and robustness.
\vspace{-3mm}

\begin{figure*}[t]
    \centering
    \begin{subfigure}{0.45\textwidth}
        \includegraphics[width=\linewidth]{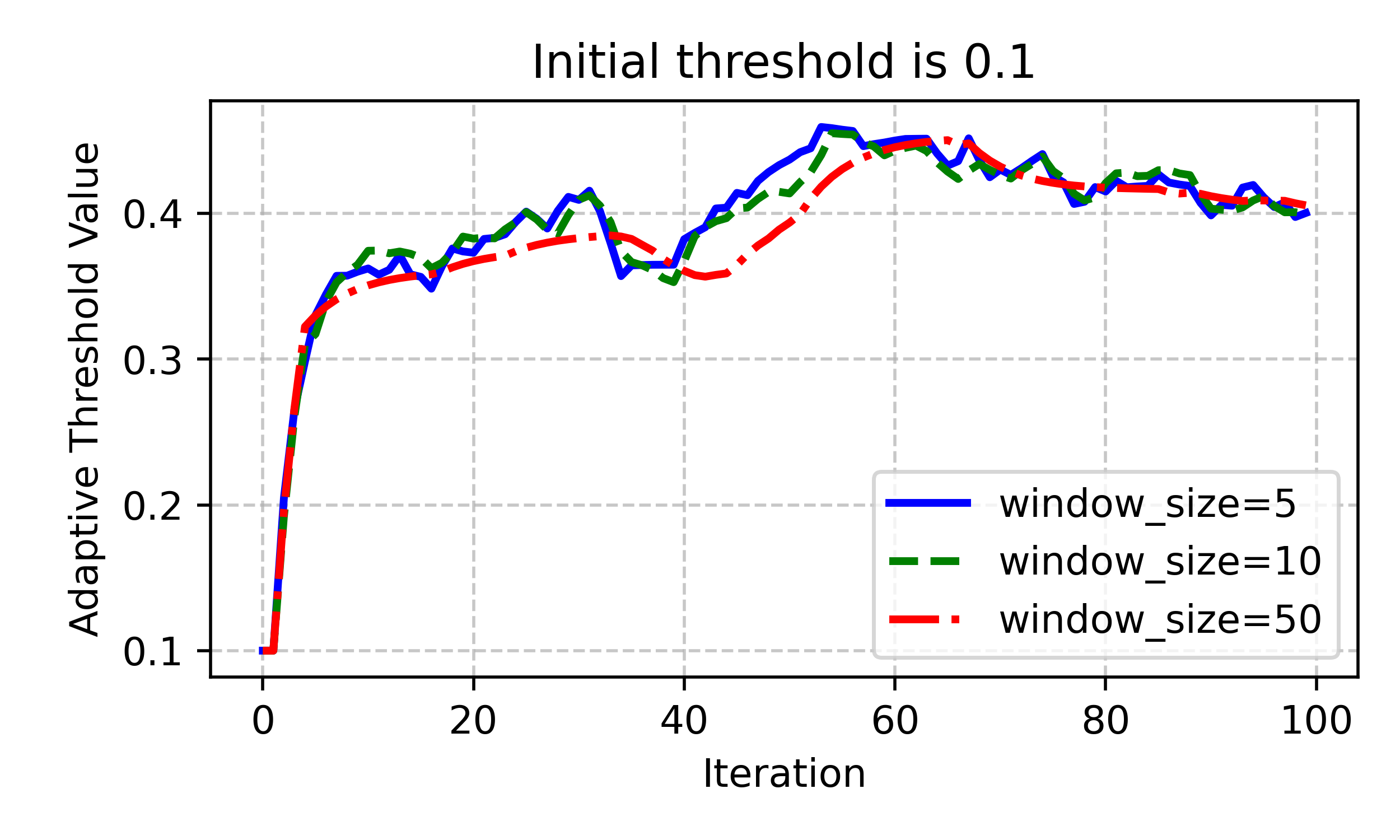}
        \vspace{-2em}
        \caption{}
    \end{subfigure}
    \begin{subfigure}{0.45\textwidth}
        \includegraphics[width=\linewidth]{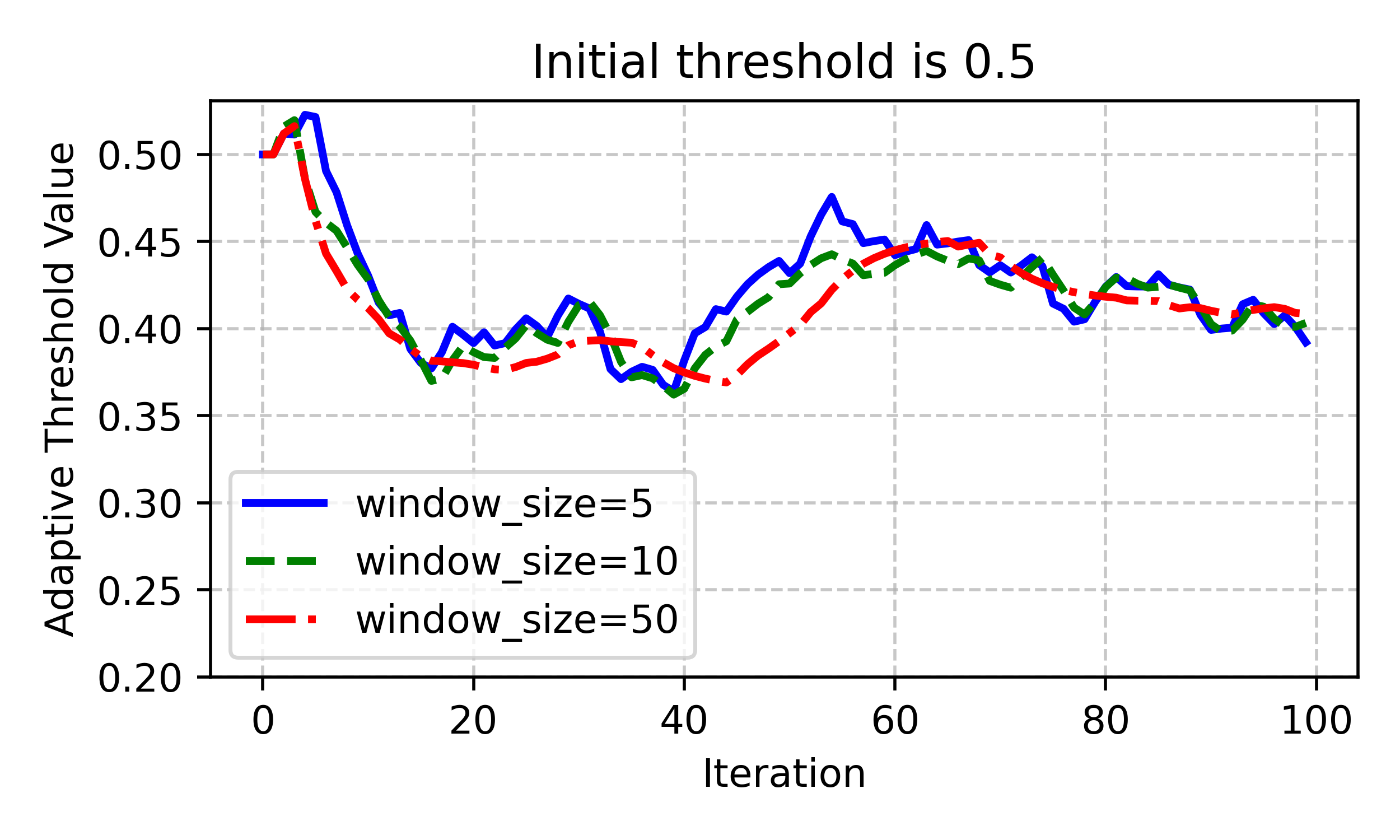}
        \vspace{-2em}
        \caption{}
    \end{subfigure}
    
    \begin{subfigure}{0.45\textwidth}
        \includegraphics[width=\linewidth]{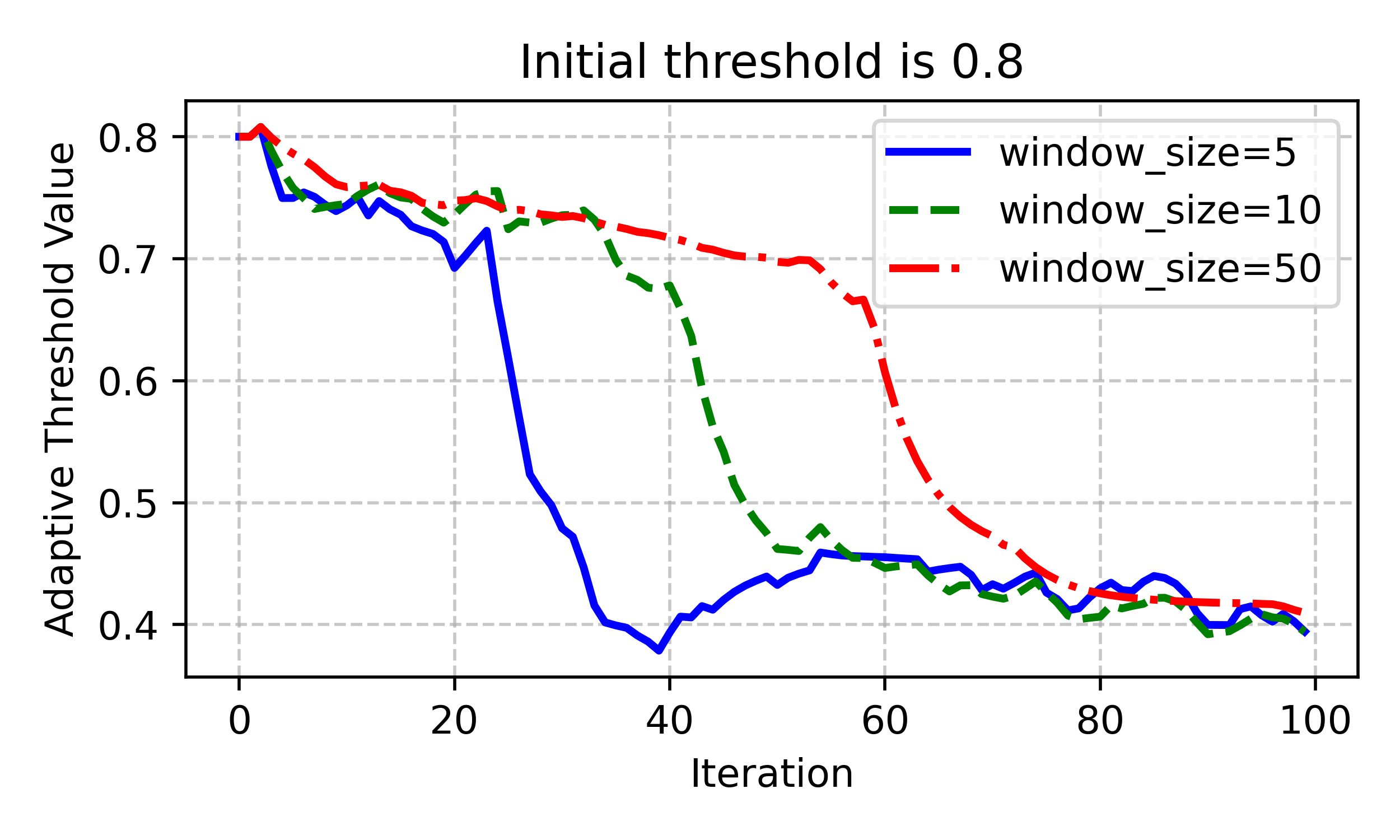}
        \vspace{-2em}
        \caption{}
    \end{subfigure}
    \begin{subfigure}{0.45\textwidth}
        \includegraphics[width=\linewidth]{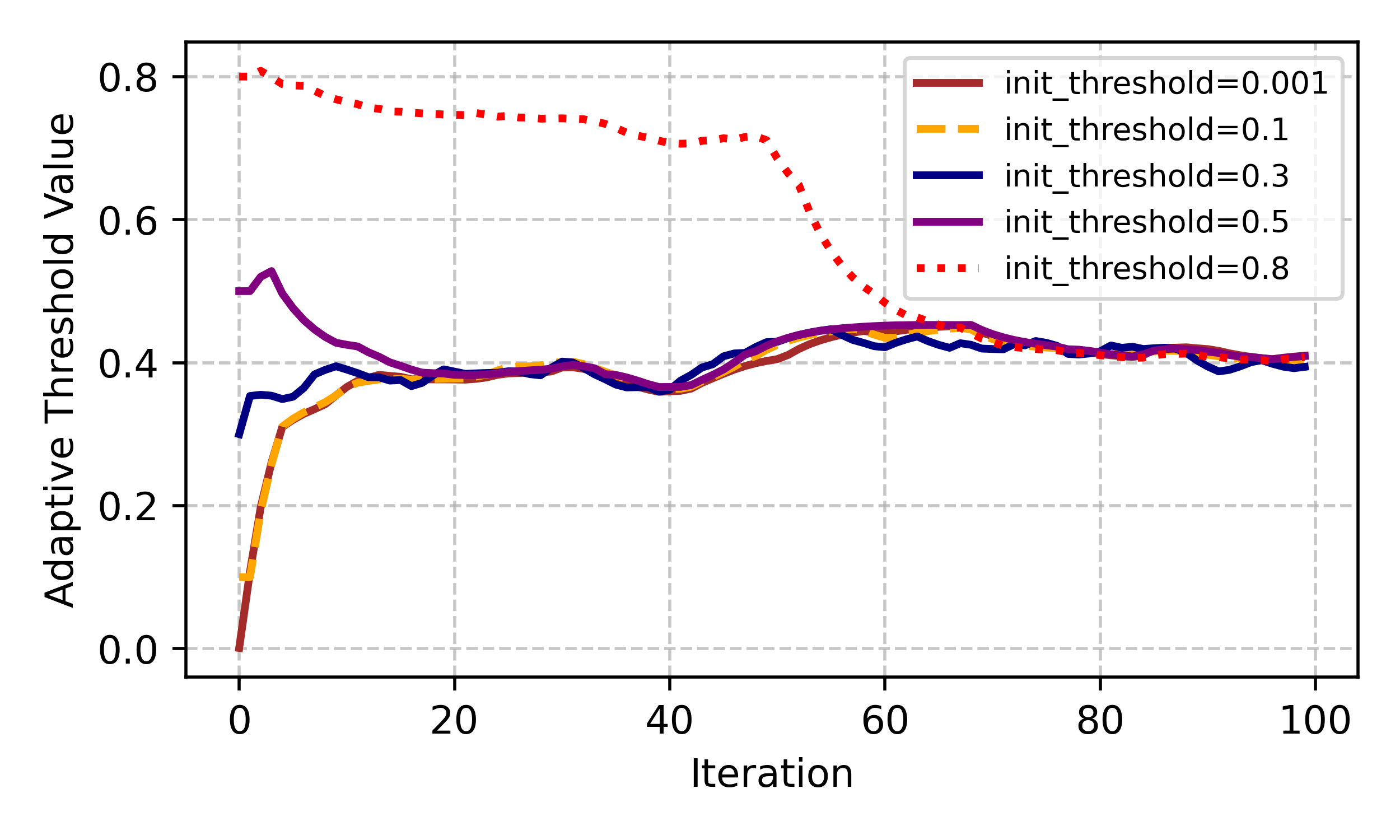}
        \vspace{-2em}
        \caption{}
    \end{subfigure}
    \vspace{-0.5em}
    \caption{Ablation study of the adaptive threshold adjustment dynamics for different initial threshold values and window sizes.}
    \vspace{-3mm}
    \label{fig:window_comparison}
\end{figure*}
    
    \begin{table}[t]
        \centering
        \caption{Comparison between CCLoss and Mainstream Loss Functions}
        \label{tab:r1_loss_comparison}
        \vspace{-3mm}
        \begin{tabular}{l|c|c}
        \toprule
        Detection Loss & AP@0.5 & AP@0.7 \\
        \midrule
        Smooth L1 \cite{pytorch_smoothl1loss} & 49.64 & 48.79 \\
        IoU \cite{8886046} & 66.52 & 64.92 \\
        CCLoss (ours) & \textbf{80.41} & \textbf{78.35} \\
        \midrule[0.75pt]
        Segmentation Loss & \multicolumn{2}{c}{AP@IoU} \\
        \midrule
        Cross Entropy \cite{10.5555/3618408.3619400} & \multicolumn{2}{c}{37.09} \\
        CCLoss (ours) & \multicolumn{2}{c}{\textbf{39.34}} \\
        \bottomrule
        \end{tabular}
        \vspace{-3mm}
    \end{table}

\textcolor{mycolor}{\subsubsection{Comparison Results in Computational Latency.}
We measured the average latency per group checking\footnote{This means the latency that verifies one collaborative group agents to identify the malicious agent. For example, with collaborative 5 agents, the latency of identifying the malicious agent from the 5 agents is used to denote this metric.} for ROBOSAC, PASAC, and Linear checking, the results are shown in Table~\ref{tab:latency}. It demonstrates that the computational latency per group checking is roughly comparable across all methods. Since hardware constraints often necessitate processing agents in groups rather than all-at-once, PASAC's ability to efficiently identify malicious agents through logarithmic search steps (fewer total checking times in large-scale networks) offers a better trade-off between efficiency and robustness compared to checking every agent individually.}

\vspace{-3mm}

\textcolor{mycolor}{
\subsubsection{Comparison Results of CCLoss.}
In addition, to further illustrate the effectiveness of our CCLoss, we conduct additional experiments to
compare the CCLoss with mainstream loss functions, including Smooth L1 loss \cite{pytorch_smoothl1loss} and IoU loss \cite{8886046} for object
detection, and Cross Entropy loss \cite{10.5555/3618408.3619400} for segmentation. The results are shown in Table \ref{tab:r1_loss_comparison}. We can see that
CCLoss signiﬁcantly outperforms the mainstream loss functions across both detection and segmentation
tasks. These results demonstrate that our CCLoss not only provides a stronger veriﬁcation criterion for
consensus but also contributes directly to superior perception performance when compared to popular,
widely-used alternatives. 
}

\begin{table}[t]
    \centering
    \caption{Sensitivity analysis of the initial threshold value of the online adaptive threshold.}
    \vspace{-3mm}
    \label{tab:threshold_sensitivity}
    \begin{tabular}{c|c|c}
    \toprule
    Initial Threshold & AP@0.5 & AP@0.7 \\
    \midrule
    0.001 & 79.62 & 76.96 \\
    0.05 & 79.31 & 77.01 \\
    0.1 & 79.50 & 76.77 \\
    0.5 & 80.16 & 78.36 \\
    0.8 & 76.65 & 75.17 \\
    \bottomrule
    \end{tabular}
    \vspace{-3mm}
    \end{table}

\begin{figure*}[t]
    \centering
    \includegraphics[width=.8\linewidth]{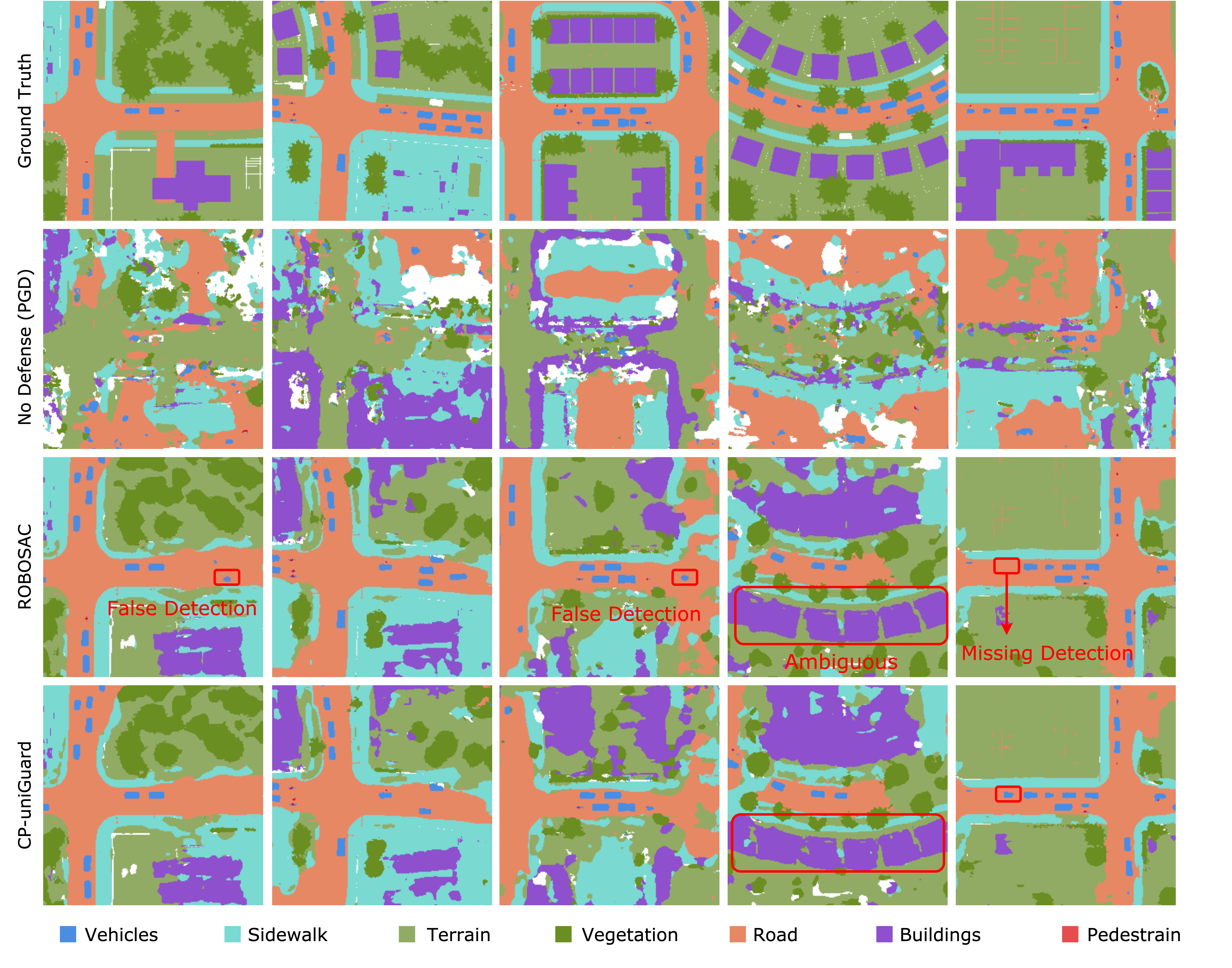}
    \vspace{-3mm}
    \caption{\textbf{Visualization} of no defense and defensive CP-uniGuard and ROBOSAC results on V2X-Sim datasets.}
    \label{fig:cp_visualization_3}
    \vspace{-5mm}
\end{figure*}

\subsubsection{Study on the Online Adaptive Threshold Mechanism.}

In this section, we conducted an analysis on the online adaptive threshold mechanism. As shown in Fig. \ref{fig:window_comparison} (d), we plot the adaptive threshold value with different initial thresholds. We can observe that the threshold can quickly converge to a stable value, which is around 0.4. In addition, the threshold is not fixed, but dynamically adjusted based on the current situation to ensure the robustness of the system and a low false positive rate. \textcolor{mycolor}{Fig. \ref{fig:window_comparison} (a), (b), and (c) also indicate that the proposed mechanism consistently converges to a stable value regardless of initialization, and the window size controls the sensitivity of the threshold to the recent history of consistency scores, demonstrating robustness and fast adaptation. }

\textcolor{mycolor}{
In addition, Table~\ref{tab:threshold_sensitivity} presents results where we vary the initialization of the adaptive threshold across a wide range and report the corresponding performance. It shows that our framework maintains robustness to different initial values, as the average precision only varies within a small margin and the overall performance remains strong. This stable convergence to an effective value demonstrates that our adaptive adjustment mechanism mitigates the effect of this hyperparameter and further validates the robustness of our method with respect to threshold selection.}

\begin{table}[t]
    \caption{\textbf{Generalization Evaluation.} We evaluate and compare the generalization capability of CP-uniGuard on two attacks, PGD and C\&W, respectively.}
    \label{tab:generalization_evaluation}
    \vspace{-3mm}
    \resizebox{1\linewidth}{!}{
    \begin{tabular}{l|cc}
    \toprule
    \textbf{Method} & \textbf{AP@0.5} & \textbf{AP@0.7} \\ \midrule

    CP-Guard+ (against PGD attack) & 71.7 & 68.6 \\
    CP-uniGuard (against PGD attack) & \textbf{80.4 (+8.7)} & \textbf{78.3 (+9.7)}\\
    \midrule
    
    CP-Guard+ (against C\&W attack) & 73.9 & 71.2 \\
    CP-uniGuard (against C\&W attack) & \textbf{80.2 (+6.3)} & \textbf{77.6 (+6.4)} \\
      \bottomrule 
    \end{tabular}
    }
    \vspace{-3mm}
\end{table}

\subsubsection{Generalization Evaluation}

We further evaluate the generalization capability of CP-uniGuard across different attack types, comparing it with feature-level defense methods such as CP-Guard+ \cite{hu2025cpguardnewparadigmmalicious}. For CP-Guard+, we train the model on the CP-GuardBench dataset \cite{hu2025cpguardnewparadigmmalicious} while excluding specific attacks from the training set to assess its generalization performance. Specifically, we exclude the PGD attack and C\&W attack during training and subsequently evaluate the model's performance against these two attacks, respectively. Results are presented in Table~\ref{tab:generalization_evaluation}, we can see that CP-uniGuard consistently achieves superior performance compared to CP-Guard+ under both PGD and C\&W attacks which were unseen. This performance gap stems from CP-uniGuard's fundamental design: rather than relying on training, CP-uniGuard leverages consensus verification, which provides inherent generalization to unseen attacks without requiring prior exposure during training.

\subsubsection{Qualitative Evaluation}

As depicted in Fig. \ref{fig:cp_visualization_3}, we present the visualization results on the V2X-Sim dataset. Without CP-uniGuard, attackers can significantly disrupt collaborative perception, leading to a marked degradation in the performance of BEV segmentation tasks.
However, our introduced CP-uniGuard framework can intelligently identify benign collaborators and eliminate malicious collaborators, thereby facilitating robust CP. \textcolor{mycolor}{In addition, compared with ROBOSAC which generates false positives and segmentation ambiguity, CP-uniGuard can achieve more accurate and stable results.}

\section{Limitations and Future Work}
\label{sec:limitations_future_work}

Despite the promising results of CP-uniGuard in defending against malicious agents in collaborative perception, our method still has some limitations and failure cases: 
\textcolor{mycolor}{
\begin{enumerate}\setlength{\itemsep}{0pt}
    \item \textit{Adaptive Collusion in Blind Spots}: If adaptive attackers coordinate to generate ``consistent" false information (e.g., ghost objects) specifically located in the ego agent's blind spot, they may evade the spatial consistency checks.
    \item \textit{Adaptive Threshold Exploitation}: While our online threshold $\varepsilon_t$ is dynamic, a sophisticated attacker with perfect knowledge of ego agent's recent history could theoretically adapt their perturbation magnitude to stay marginally below the instantaneous rejection threshold.
    \item \textit{Ego Sensor Failure}: Since CP-uniGuard uses the ego agent as the root of trust, if an ego's own sensors are severely compromised (e.g., by physical obstruction), the system may incorrectly flag benign collaborators as malicious due to the high discrepancy.
\end{enumerate}
In our future work, we plan to address these limitations by enhancing the temporal and intrinsic robustness of the framework. To counter adaptive collusion in blind spots, we will incorporate multi-frame temporal consistency checks, leveraging historical trajectory data to distinguish valid dynamic agents from coordinated ghost objects that lack plausible motion continuity. To mitigate the risk of white-box threshold exploitation, we aim to introduce randomization mechanisms or long-term behavioral monitoring, enabling the detection of adversaries that persistently hover dangerously close to the rejection boundary. Finally, to address the single point of failure regarding ego sensor health, we will integrate intrinsic sensor health monitoring and cross-modal verification (e.g., LiDAR-Camera consistency), allowing the system to self-diagnose reliability issues and dynamically adjust confidence levels before penalizing collaborators.}

\section{Conclusion}
\label{sec:conclusion}
In this paper, we have designed a novel defense framework for CP named CP-uniGuard, which consists of three parts. The first is PASAC which can effectively sample the collaborators without the prior probabilities of malicious agents. The second is collaborative consistency loss verification which calculates the discrepancy between the ego agent and the collaborators, which is used as a verification criterion for consensus. The third is the online adaptive threshold which can adaptively adjust the threshold to ensure the stability of the system in dynamic environments.
Extensive experiments show that our CP-uniGuard can defend against different types of attacks and can adaptively adjust the trade-off between performance and computational overhead.


\bibliographystyle{IEEEtran}

{\small
\bibliography{ref, ref2}}

\begin{thebibliography}{10}
\providecommand{\url}[1]{#1}
\csname url@samestyle\endcsname
\providecommand{\newblock}{\relax}
\providecommand{\bibinfo}[2]{#2}
\providecommand{\BIBentrySTDinterwordspacing}{\spaceskip=0pt\relax}
\providecommand{\BIBentryALTinterwordstretchfactor}{4}
\providecommand{\BIBentryALTinterwordspacing}{\spaceskip=\fontdimen2\font plus
\BIBentryALTinterwordstretchfactor\fontdimen3\font minus \fontdimen4\font\relax}
\providecommand{\BIBforeignlanguage}[2]{{%
\expandafter\ifx\csname l@#1\endcsname\relax
\typeout{** WARNING: IEEEtran.bst: No hyphenation pattern has been}%
\typeout{** loaded for the language `#1'. Using the pattern for}%
\typeout{** the default language instead.}%
\else
\language=\csname l@#1\endcsname
\fi
#2}}
\providecommand{\BIBdecl}{\relax}
\BIBdecl

\bibitem{huCPGuardMaliciousAgent2025}
S.~Hu, Y.~Tao, G.~Xu, Y.~Deng, X.~Chen, Y.~Fang, and S.~Kwong, ``\BIBforeignlanguage{en}{{{C}{P}}-{{G}uard}: {{M}alicious} {{A}gent} {{D}etection} and {{D}efense} in {{C}ollaborative} {{B}ird}’s {{E}ye} {{V}iew} {{P}erception}},'' \emph{\BIBforeignlanguage{en}{Proceedings of the AAAI Conference on Artificial Intelligence}}, vol.~39, no.~22, pp. 23\,203--23\,211, Apr. 2025, number: 22.

\bibitem{hanCollaborativePerceptionAutonomous2023}
Y.~Han, H.~Zhang, H.~Li, Y.~Jin, C.~Lang, and Y.~Li, ``{C}ollaborative {{P}erception} in {{A}utonomous} {{D}riving}: {{M}ethods}, {{D}atasets} and {{C}hallenges},'' \emph{IEEE Intelligent Transportation Systems Magazine}, vol.~15, no.~6, pp. 131--151, Nov. 2023, arXiv:2301.06262 [cs].

\bibitem{huCollaborativePerceptionConnected2024}
S.~Hu, Z.~Fang, Y.~Deng, X.~Chen, and Y.~Fang, ``{C}ollaborative {{P}erception} for {{C}onnected} and {{A}utonomous} {{D}riving}: {{C}hallenges}, {{P}ossible} {{S}olutions} and {{O}pportunities},'' Jan. 2024, arXiv:2401.01544 [cs, eess].

\bibitem{tuAdversarialAttacksMultiAgent2021}
J.~Tu, T.~Wang, J.~Wang, S.~Manivasagam, M.~Ren, and R.~Urtasun, ``{A}dversarial {{A}ttacks} {{O}n} {{M}ulti}-{{A}gent} {{C}ommunication},'' in \emph{2021 {IEEE}/{CVF} {International} {Conference} on {Computer} {Vision} ({ICCV})}, Oct. 2021, pp. 7748--7757, iSSN: 2380-7504.

\bibitem{liUsAdversariallyRobust2023}
Y.~Li, Q.~Fang, J.~Bai, S.~Chen, F.~Juefei-Xu, and C.~Feng, ``\BIBforeignlanguage{en}{{A}mong {{U}s}: {{A}dversarially} {{R}obust} {{C}ollaborative} {{P}erception} by {{C}onsensus}},'' in \emph{\BIBforeignlanguage{en}{2023 {IEEE}/{CVF} {International} {Conference} on {Computer} {Vision} ({ICCV})}}.\hskip 1em plus 0.5em minus 0.4em\relax Paris, France: IEEE, Oct. 2023, pp. 186--195.

\bibitem{zhaoMaliciousAgentDetection2023}
Y.~Zhao, Z.~Xiang, S.~Yin, X.~Pang, S.~Chen, and Y.~Wang, ``\BIBforeignlanguage{en}{{M}alicious {{A}gent} {{D}etection} for {{R}obust} {{M}ulti}-{{A}gent} {{C}ollaborative} {{P}erception}},'' Oct. 2023, arXiv:2310.11901 [cs].

\bibitem{zhangDataFabricationCollaborative2023}
Q.~Zhang, S.~Jin, R.~Zhu, J.~Sun, X.~Zhang, Q.~A. Chen, and Z.~M. Mao, ``\BIBforeignlanguage{en}{{O}n {{D}ata} {{F}abrication} in {{C}ollaborative} {{V}ehicular} {{P}erception}: {{A}ttacks} and {{C}ountermeasures}},'' Oct. 2023, arXiv:2309.12955 [cs].

\bibitem{hu2025cpguardnewparadigmmalicious}
S.~Hu, Y.~Tao, Z.~Fang, G.~Xu, Y.~Deng, S.~Kwong, and Y.~Fang, ``{C}{P}-{G}uard+: {A} {N}ew {P}aradigm for {M}alicious {A}gent {D}etection and {D}efense in {C}ollaborative {P}erception,'' 2025.

\bibitem{fangPACPPriorityAwareCollaborative2024}
Z.~Fang, S.~Hu, H.~An, Y.~Zhang, J.~Wang, H.~Cao, X.~Chen, and Y.~Fang, ``{{P}{A}{C}{P}}: {{P}riority}-{{A}ware} {{C}ollaborative} {{P}erception} for {{C}onnected} and {{A}utonomous} {{V}ehicles},'' \emph{IEEE Transactions on Mobile Computing}, vol.~23, no.~12, pp. 15\,003--15\,018, Dec. 2024, conference Name: IEEE Transactions on Mobile Computing.

\bibitem{fangPrioritizedInformationBottleneck2025}
Z.~Fang, S.~Hu, J.~Wang, Y.~Deng, X.~Chen, and Y.~Fang, ``{P}rioritized {{I}nformation} {{B}ottleneck} {{T}heoretic} {{F}ramework} {{W}ith} {{D}istributed} {{O}nline} {{L}earning} for {{E}dge} {{V}ideo} {{A}nalytics},'' \emph{IEEE Transactions on Networking}, pp. 1--17, 2025, conference Name: IEEE Transactions on Networking.

\bibitem{Su_Chen_Bai_Lin_Li_Qu_Zhou_2024}
W.~Su, L.~Chen, Y.~Bai, X.~Lin, G.~Li, Z.~Qu, and P.~Zhou, ``{W}hat {M}akes {G}ood {C}ollaborative {V}iews? {C}ontrastive {M}utual {I}nformation {M}aximization for {M}ulti-{A}gent {P}erception,'' \emph{Proceedings of the AAAI Conference on Artificial Intelligence}, vol.~38, no.~16, pp. 17\,550--17\,558, Mar. 2024.

\bibitem{9451536}
Z.~Fang, J.~Wang, J.~Du, X.~Hou, Y.~Ren, and Z.~Han, ``{S}tochastic {O}ptimization-{A}ided {E}nergy-{E}fficient {I}nformation {C}ollection in {I}nternet of {U}nderwater {T}hings {N}etworks,'' \emph{IEEE Internet of Things Journal}, vol.~9, no.~3, pp. 1775--1789, 2022.

\bibitem{9312959}
Z.~Fang, J.~Wang, C.~Jiang, Q.~Zhang, and Y.~Ren, ``{A}o{I}-{I}nspired {C}ollaborative {I}nformation {C}ollection for {A}{U}{V}-{A}ssisted {I}nternet of {U}nderwater {T}hings,'' \emph{IEEE Internet of Things Journal}, vol.~8, no.~19, pp. 14\,559--14\,571, 2021.

\bibitem{10160546}
Y.~Lu, Q.~Li, B.~Liu, M.~Dianati, C.~Feng, S.~Chen, and Y.~Wang, ``{R}obust {C}ollaborative 3{D} {O}bject {D}etection in {P}resence of {P}ose {E}rrors,'' in \emph{2023 IEEE International Conference on Robotics and Automation (ICRA)}, 2023, pp. 4812--4818.

\bibitem{huAgentsCoDriverLargeLanguage2024}
S.~Hu, Z.~Fang, Z.~Fang, Y.~Deng, X.~Chen, and Y.~Fang, ``\BIBforeignlanguage{en}{{{A}gents{C}o{D}river}: {{L}arge} {{L}anguage} {{M}odel} {{E}mpowered} {{C}ollaborative} {{D}riving} with {{L}ifelong} {{L}earning}},'' Apr. 2024, arXiv:2404.06345 [cs].

\bibitem{huAgentsCoMergeLargeLanguage2025}
S.~Hu, Z.~Fang, Z.~Fang, Y.~Deng, X.~Chen, Y.~Fang, and S.~T.~W. Kwong, ``{{A}gents{C}o{M}erge}: {{L}arge} {{L}anguage} {{M}odel} {{E}mpowered} {{C}ollaborative} {{D}ecision} {{M}aking} for {{R}amp} {{M}erging},'' \emph{IEEE Transactions on Mobile Computing}, pp. 1--15, 2025.

\bibitem{tao2024directcpdirectedcollaborativeperception}
Y.~Tao, S.~Hu, Z.~Fang, and Y.~Fang, ``{D}irect-{C}{P}: {D}irected {C}ollaborative {P}erception for {C}onnected and {A}utonomous {V}ehicles via {P}roactive {A}ttention,'' 2024.

\bibitem{huAdaptiveCommunicationsCollaborative2024}
S.~Hu, Z.~Fang, H.~An, G.~Xu, Y.~Zhou, X.~Chen, and Y.~Fang, ``{A}daptive {{C}ommunications} in {{C}ollaborative} {{P}erception} with {{D}omain} {{A}lignment} for {{A}utonomous} {{D}riving},'' in \emph{{GLOBECOM} 2024 - 2024 {IEEE} {Global} {Communications} {Conference}}, Dec. 2024, pp. 746--751, iSSN: 2576-6813.

\bibitem{lu2024an}
Y.~Lu, Y.~Hu, Y.~Zhong, D.~Wang, Y.~Wang, and S.~Chen, ``{A}n {E}xtensible {F}ramework for {O}pen {H}eterogeneous {C}ollaborative {P}erception,'' in \emph{The Twelfth International Conference on Learning Representations}, 2024.

\bibitem{10779389}
S.~Hu, Z.~Fang, Y.~Deng, X.~Chen, Y.~Fang, and S.~Kwong, ``{T}oward {F}ull-{S}cene {D}omain {G}eneralization in {M}ulti-{A}gent {C}ollaborative {B}ird's {E}ye {V}iew {S}egmentation for {C}onnected and {A}utonomous {D}riving,'' \emph{IEEE Transactions on Intelligent Transportation Systems}, pp. 1--14, 2024.

\bibitem{10457955}
S.~Ren, Z.~Lei, Z.~Wang, M.~Dianati, Y.~Wang, S.~Chen, and W.~Zhang, ``{I}nterruption-{A}ware {C}ooperative {P}erception for {V}2{X} {C}ommunication-{A}ided {A}utonomous {D}riving,'' \emph{IEEE Transactions on Intelligent Vehicles}, vol.~9, no.~4, pp. 4698--4714, 2024.

\bibitem{10.1007/978-3-031-19824-3_19}
Z.~Lei, S.~Ren, Y.~Hu, W.~Zhang, and S.~Chen, ``{L}atency-{A}ware {C}ollaborative {P}erception,'' in \emph{Computer Vision – ECCV 2022: 17th European Conference, Tel Aviv, Israel, October 23–27, 2022, Proceedings, Part XXXII}.\hskip 1em plus 0.5em minus 0.4em\relax Berlin, Heidelberg: Springer-Verlag, 2022, p. 316–332.

\bibitem{Li_2021_ICCV}
Y.~Li, C.~Wen, F.~Juefei-Xu, and C.~Feng, ``{F}ooling {L}i{D}{A}{R} {P}erception via {A}dversarial {T}rajectory {P}erturbation,'' in \emph{Proceedings of the IEEE/CVF International Conference on Computer Vision (ICCV)}, October 2021.

\bibitem{279980}
R.~S. Hallyburton, Y.~Liu, Y.~Cao, Z.~M. Mao, and M.~Pajic, ``{S}ecurity {A}nalysis of {{C}amera-{L}i{D}{A}{R}} {F}usion {A}gainst {{B}lack-{B}ox} {A}ttacks on {A}utonomous {V}ehicles,'' in \emph{31st USENIX Security Symposium (USENIX Security 22)}.\hskip 1em plus 0.5em minus 0.4em\relax Boston, MA: USENIX Association, Aug. 2022, pp. 1903--1920.

\bibitem{Tu_2020_CVPR}
J.~Tu, M.~Ren, S.~Manivasagam, M.~Liang, B.~Yang, R.~Du, F.~Cheng, and R.~Urtasun, ``{P}hysically {R}ealizable {A}dversarial {E}xamples for {L}i{D}{A}{R} {O}bject {D}etection,'' in \emph{IEEE/CVF Conference on Computer Vision and Pattern Recognition (CVPR)}, June 2020.

\bibitem{9120490}
F.~A. Schiegg, D.~Bischoff, J.~R. Krost, and I.~Llatser, ``{A}nalytical {P}erformance {E}valuation of the {C}ollective {P}erception {S}ervice in {I}{E}{E}{E} 802.11p {N}etworks,'' in \emph{2020 IEEE Wireless Communications and Networking Conference (WCNC)}, 2020, pp. 1--6.

\bibitem{raghunathanUnderstandingMitigatingTradeoff2020}
A.~Raghunathan, S.~M. Xie, F.~Yang, J.~C. Duchi, and P.~Liang, ``{U}nderstanding and mitigating the tradeoff between robustness and accuracy,'' in \emph{Proceedings of the 37th {International} {Conference} on {Machine} {Learning}}, ser. {ICML}'20, vol. 119.\hskip 1em plus 0.5em minus 0.4em\relax JMLR.org, Jul. 2020, pp. 7909--7919.

\bibitem{zhangAdversarialExamplesOpportunities2020}
J.~Zhang and C.~Li, ``{A}dversarial {{E}xamples}: {{O}pportunities} and {{C}hallenges},'' \emph{IEEE Transactions on Neural Networks and Learning Systems}, vol.~31, no.~7, pp. 2578--2593, Jul. 2020, conference Name: IEEE Transactions on Neural Networks and Learning Systems.

\bibitem{ni2023recovering}
T.~Ni, X.~Zhang, and Q.~Zhao, ``{R}ecovering {F}ingerprints from {I}n-{D}isplay {F}ingerprint {S}ensors via {E}lectromagnetic {S}ide {C}hannel,'' in \emph{Proceedings of the 2023 ACM SIGSAC Conference on Computer and Communications Security}, 2023, pp. 253--267.

\bibitem{ni2023xporter}
T.~Ni, Y.~Chen, W.~Xu, L.~Xue, and Q.~Zhao, ``{X}{P}orter: {A} study of the multi-port charger security on privacy leakage and voice injection,'' in \emph{Proceedings of the 29th Annual International Conference on Mobile Computing and Networking}, 2023, pp. 1--15.

\bibitem{yuan2024itpatch}
S.~Yuan, H.~Li, X.~Han, G.~Xu, W.~Jiang, T.~Ni, Q.~Zhao, and Y.~Fang, ``{I}{T}{P}atch: {A}n {I}nvisible and {T}riggered {P}hysical {A}dversarial {P}atch against {T}raffic {S}ign {R}ecognition,'' \emph{arXiv preprint arXiv:2409.12394}, 2024.

\bibitem{kuhn1955hungarian}
H.~W. Kuhn, ``{T}he {H}ungarian method for the assignment problem,'' \emph{Naval research logistics quarterly}, vol.~2, no. 1-2, pp. 83--97, 1955.

\bibitem{liV2XSimMultiAgentCollaborative2022}
Y.~Li, D.~Ma, Z.~An, Z.~Wang, Y.~Zhong, S.~Chen, and C.~Feng, ``\BIBforeignlanguage{en}{{{V}2{X}}-{{S}im}: {{M}ulti}-{{A}gent} {{C}ollaborative} {{P}erception} {{D}ataset} and {{B}enchmark} for {{A}utonomous} {{D}riving}},'' \emph{\BIBforeignlanguage{en}{IEEE Robotics and Automation Letters}}, vol.~7, no.~4, pp. 10\,914--10\,921, Oct. 2022.

\bibitem{yuDAIRV2XLargeScaleDataset2022}
H.~Yu, Y.~Luo, M.~Shu, Y.~Huo, Z.~Yang, Y.~Shi, Z.~Guo, H.~Li, X.~Hu, J.~Yuan, and Z.~Nie, ``\BIBforeignlanguage{en}{{{D}{A}{I}{R}}-{{V}2{X}}: {A} {{L}arge}-{{S}cale} {{D}ataset} for {{V}ehicle}-{{I}nfrastructure} {{C}ooperative} {3{D}} {{O}bject} {{D}etection}},'' in \emph{\BIBforeignlanguage{en}{2022 {IEEE}/{CVF} {Conference} on {Computer} {Vision} and {Pattern} {Recognition} ({CVPR})}}.\hskip 1em plus 0.5em minus 0.4em\relax New Orleans, LA, USA: IEEE, Jun. 2022.

\bibitem{8578474}
W.~Luo, B.~Yang, and R.~Urtasun, ``{F}ast and {F}urious: {R}eal {T}ime {E}nd-to-{E}nd 3{D} {D}etection, {T}racking and {M}otion {F}orecasting with a {S}ingle {C}onvolutional {N}et,'' in \emph{2018 IEEE/CVF Conference on Computer Vision and Pattern Recognition}, 2018, pp. 3569--3577.

\bibitem{ronnebergerUNetConvolutionalNetworks2015}
O.~Ronneberger, P.~Fischer, and T.~Brox, ``\BIBforeignlanguage{en}{{U}-{{N}et}: {{C}onvolutional} {{N}etworks} for {{B}iomedical} {{I}mage} {{S}egmentation}},'' May 2015, arXiv:1505.04597 [cs].

\bibitem{10.1007/978-3-030-58536-5_36}
T.-H. Wang, S.~Manivasagam, M.~Liang, B.~Yang, W.~Zeng, and R.~Urtasun, ``{V}2{V}{N}et: {V}ehicle-to-{V}ehicle {C}ommunication for {J}oint {P}erception and {P}rediction,'' in \emph{Computer Vision - ECCV 2020: 16th European Conference, Glasgow, UK, August 23-28, 2020, Proceedings, Part II}.\hskip 1em plus 0.5em minus 0.4em\relax Berlin, Heidelberg: Springer-Verlag, 2020, pp. 605--621.

\bibitem{goodfellow2015explainingharnessingadversarialexamples}
I.~J. Goodfellow, J.~Shlens, and C.~Szegedy, ``{E}xplaining and {H}arnessing {A}dversarial {E}xamples,'' 2015.

\bibitem{carlini2017evaluatingrobustnessneuralnetworks}
N.~Carlini and D.~Wagner, ``{T}owards {E}valuating the {R}obustness of {N}eural {N}etworks,'' 2017.

\bibitem{madry2018towards}
A.~Madry, A.~Makelov, L.~Schmidt, D.~Tsipras, and A.~Vladu, ``{T}owards {D}eep {L}earning {M}odels {R}esistant to {A}dversarial {A}ttacks,'' in \emph{International Conference on Learning Representations}, 2018.

\bibitem{pytorch_smoothl1loss}
{PyTorch Contributors}, ``{{S}mooth{L}1{L}oss — {P}y{T}orch {D}ocumentation},'' \url{https://pytorch.org/docs/stable/generated/torch.nn.SmoothL1Loss.html}, 2023.

\bibitem{8886046}
D.~Zhou, J.~Fang, X.~Song, C.~Guan, J.~Yin, Y.~Dai, and R.~Yang, ``{I}o{U} {L}oss for 2{D}/3{D} {O}bject {D}etection,'' in \emph{2019 International Conference on 3D Vision (3DV)}, 2019, pp. 85--94.

\bibitem{10.5555/3618408.3619400}
A.~Mao, M.~Mohri, and Y.~Zhong, ``{C}ross-entropy loss functions: theoretical analysis and applications,'' in \emph{Proceedings of the 40th International Conference on Machine Learning}, ser. ICML'23.\hskip 1em plus 0.5em minus 0.4em\relax JMLR.org, 2023.

\end{thebibliography}




\end{document}